%% file: example_paper.tex
\documentclass{article}

\usepackage{microtype}
\usepackage{graphicx}
\usepackage{subcaption}
\usepackage{booktabs} % for professional tables
\usepackage{enumitem} % for customizing lists

\usepackage{hyperref}

\usepackage[preprint]{icml2026}

\usepackage{amsmath, bm}
\usepackage{amssymb}
\usepackage{mathtools}
\usepackage{amsthm}

\usepackage{multirow}

\usepackage[capitalize,noabbrev]{cleveref}

\theoremstyle{plain}
\newtheorem{theorem}{Theorem}[section]
\newtheorem{proposition}[theorem]{Proposition}

\theoremstyle{definition}

\theoremstyle{remark}

\newcommand{\eg}{\emph{e. g.}}

\usepackage[textsize=tiny]{todonotes}

\begin{document}

\twocolumn[
  \icmltitle{Understanding Degradation with Vision Language Model}

  % It is OKAY to include author information, even for blind submissions: the
  % style file will automatically remove it for you unless you've provided
  % the [accepted] option to the icml2026 package.

  % List of affiliations: The first argument should be a (short) identifier you
  % will use later to specify author affiliations Academic affiliations
  % should list Department, University, City, Region, Country Industry
  % affiliations should list Company, City, Region, Country

  % You can specify symbols, otherwise they are numbered in order. Ideally, you
  % should not use this facility. Affiliations will be numbered in order of
  % appearance and this is the preferred way.
  \icmlsetsymbol{equal}{*}

  \begin{icmlauthorlist}
    \icmlauthor{Guanzhou Lan}{equal,npu,pjlab}
    \icmlauthor{Chenyi Liao}{equal,npu_hc}
    \icmlauthor{Yuqi Yang}{npu}
    \icmlauthor{Qianli Ma}{sjtu}
    \icmlauthor{Zhigang Wang}{pjlab}
    \icmlauthor{Dong Wang}{pjlab} \\
    \icmlauthor{Bin Zhao}{npu,pjlab}
    \icmlauthor{Xuelong Li}{TeleAI,npu}
  \end{icmlauthorlist}

  \icmlaffiliation{npu}{School of Artificial Intelligence, OPtics and ElectroNics (iOPEN), Northwestern Polytechnical University.}
  \icmlaffiliation{npu_hc}{Honors College, Northwestern Polytechnical University.}
  \icmlaffiliation{pjlab}{ Shanghai AI Laboratory.}
  \icmlaffiliation{sjtu}{School of Artificial Intelligence, Shanghai Jiaotong University.}
  \icmlaffiliation{TeleAI}{TeleAI, China Telecom}
  \icmlcorrespondingauthor{Bin Zhao}{bin@nwpu.edu.cn}
  % \icmlcorrespondingauthor{Firstname2 Lastname2}{first2.last2@www.uk}

  % You may provide any keywords that you find helpful for describing your
  % paper; these are used to populate the "keywords" metadata in the PDF but
  % will not be shown in the document
  \icmlkeywords{Machine Learning, ICML}

  \vskip 0.3in
]

% this must go after the closing bracket ] following \twocolumn[ ...

% This command actually creates the footnote in the first column listing the
% affiliations and the copyright notice. The command takes one argument, which
% is text to display at the start of the footnote. The \icmlEqualContribution
% command is standard text for equal contribution. Remove it (just {}) if you
% do not need this facility.

% Use ONE of the following lines. DO NOT remove the command.
% If you have no special notice, KEEP empty braces:
%\printAffiliationsAndNotice{}  % no special notice (required even if empty)
% Or, if applicable, use the standard equal contribution text:
\printAffiliationsAndNotice{\icmlEqualContribution}

\input{sec/0_abstract}
\input{sec/1_intro}

\bibliography{example_paper}
\bibliographystyle{icml2026}

%%%%%%%%%%%%%%%%%%%%%%%%%%%%%%%%%%%%%%%%%%%%%%%%%%%%%%%%%%%%%%%%%%%%%%%%%%%%%%%
%%%%%%%%%%%%%%%%%%%%%%%%%%%%%%%%%%%%%%%%%%%%%%%%%%%%%%%%%%%%%%%%%%%%%%%%%%%%%%%
% APPENDIX
%%%%%%%%%%%%%%%%%%%%%%%%%%%%%%%%%%%%%%%%%%%%%%%%%%%%%%%%%%%%%%%%%%%%%%%%%%%%%%%
%%%%%%%%%%%%%%%%%%%%%%%%%%%%%%%%%%%%%%%%%%%%%%%%%%%%%%%%%%%%%%%%%%%%%%%%%%%%%%%

\input{sec/X_appendix}

%%%%%%%%%%%%%%%%%%%%%%%%%%%%%%%%%%%%%%%%%%%%%%%%%%%%%%%%%%%%%%%%%%%%%%%%%%%%%%%
%%%%%%%%%%%%%%%%%%%%%%%%%%%%%%%%%%%%%%%%%%%%%%%%%%%%%%%%%%%%%%%%%%%%%%%%%%%%%%%

\end{document}

%% file: sec/0_abstract.tex
\begin{abstract}
Understanding visual degradations is a critical yet challenging problem in computer vision. While recent Vision-Language Models (VLMs) excel at qualitative description, they often fall short in understanding the parametric physics underlying image degradations. In this work, we redefine degradation understanding as a \textit{hierarchical structured prediction} task, necessitating the concurrent estimation of degradation types, parameter keys, and their continuous physical values. 
Although these sub-tasks operate in disparate spaces, we prove that they can be unified under one autoregressive next-token prediction paradigm, whose error is bounded by the value-space quantization grid. Building on this insight, we introduce \textbf{DU-VLM}, a multimodal chain-of-thought model trained with supervised fine-tuning and reinforcement learning using structured rewards. Furthermore, we show that DU-VLM can serve as a zero-shot controller for pre-trained diffusion models, enabling high-fidelity image restoration without fine-tuning the generative backbone. We also introduce \textbf{DU-110k}, a large-scale dataset comprising 110,000 clean-degraded pairs with grounded physical annotations.  Extensive experiments demonstrate that our approach significantly outperforms generalist baselines in both accuracy and robustness, exhibiting generalization to unseen distributions.
\end{abstract}

%% file: sec/1_intro.tex
\section{Introduction}
\label{sec:intro}
\begin{figure}[t]
\centering
\includegraphics[width=\linewidth]{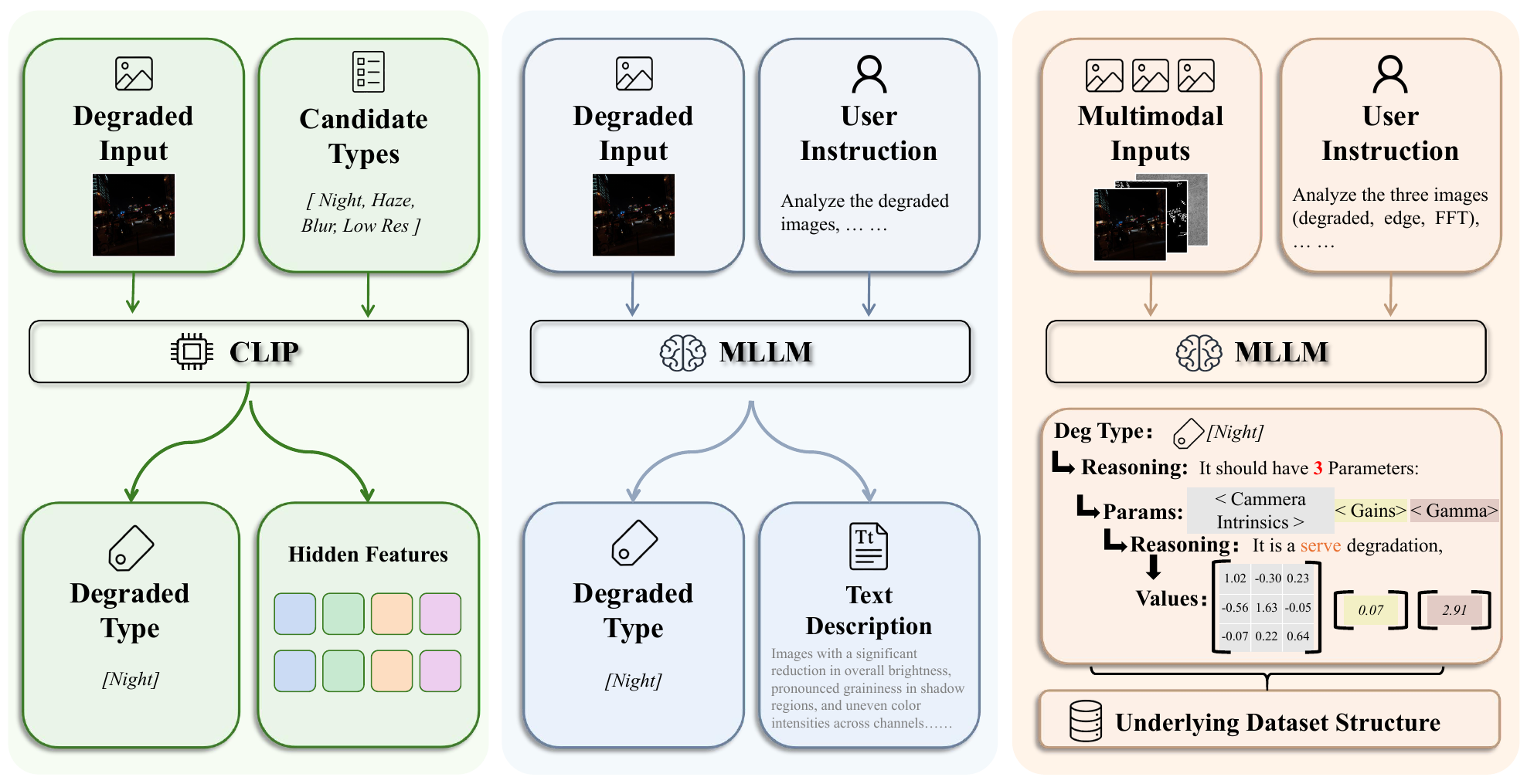}
\vspace{-0.2cm}
\caption{\textbf{Comparison of degradation understanding paradigms.} (Left) Latent embedding approaches. (Middle) Free-form text description methods. (Right) Our DU-VLM, which explicitly predicts a hierarchical tuple, providing physically interpretable parameters to directly guide restoration.}
\label{fig:Top1}
\vspace{-0.5cm}
\end{figure}
Visual perception systems frequently encounter complex input degradations arising from intricate physical phenomena. Modeling these degradations is fundamental to computer vision, with critical implications for autonomous driving, surveillance, and robotics \cite{lin2025jarvisir, fu2021let, gao2025night}. Historically, image restoration research has predominantly focused on designing specialized priors for distinct degradation type, e.g., atmospheric scattering models for dehazing \cite{he2010single}, kernel estimation for deblurring \cite{Bahat_2017_ICCV}, and Retinex theory for low-light enhancement \cite{Lime}. In the deep learning era, these physics-based priors continue to serve as inductive biases guiding network training \cite{Lan_2025_CVPR, Pham_2024_CVPR}. However, a unified, degradation-agnostic framework capable of simultaneously characterizing the diverse corruptions remains elusive.

Recent advances in Vision-Language Models (VLMs) have been introduced for degradation understanding. DA-CLIP \cite{luo2024controlling} demonstrated that identifying degradation types via contrastive embeddings can effectively condition restoration networks. Concurrently, benchmarks such as Q-bench \cite{zhang2024q}, Q-Instruct \cite{wu2024q}, and DepictQA \cite{you2024depicting} have explored the alignment between VLM outputs and human perceptual quality, primarily focusing on distortion descriptions and score regression. Furthermore, agent-based systems like JarvisIR \cite{lin2025jarvisir} employ VLMs as high-level planners to route inputs to expert models.

However, existing VLM-based approaches remain \textit{descriptive}. As shown in \cref{fig:Top1}, while capable of identifying the presence of artifacts, they fail to understand the underlying physics governing the degradation process, which limits their utility in principled, controllable restoration systems. To bridge this gap, we reformulate degradation understanding as a \textbf{hierarchical structured prediction} problem that mirrors the physical image formation process. 

A key challenge lies in learning to solve heterogeneous sub-tasks within a unified framework,including degradation type classification, parameter key selection, and continuous value regression. We theoretically demonstrate that these heterogeneous objectives can be effectively unified under a single autoregressive next-token prediction paradigm. Building upon this insight, we propose \textbf{DU-VLM}, which employs multimodal Chain-of-Thought (CoT) reasoning to deduce degradation physics in a structured manner. Specifically, our CoT design first employs textual reasoning to classify the degradation type and assess its severity, thereby establishing coarse-grained anchors that constrain the subsequent value regression to physically plausible ranges. 
Concurrently, we incorporate multi-spectral visual cues such as FFT spectra and edge maps, to enable the model to learn established correlations between frequency-domain characteristics and continuous physical parameters. 
To align the model's reasoning chains with quantitative precision, we further introduce a structured reward function optimized via Group Relative Policy Optimization (GRPO). 

To support training and evaluation, we introduce \textbf{DU-110k}, the first large-scale benchmark dedicated to parametric degradation understanding, comprising 110,000 curated image pairs annotated with hierarchical (type, key, value) physical parameters. Empirically, we demonstrate that DU-VLM's predicted parameters serve as effective zero-shot guidance for pre-trained diffusion models, enabling high-fidelity restoration without task-specific fine-tuning. Our findings reveal that synthetic physical annotations provide a robust foundation for learning degradation models that effectively generalize to real-world restoration scenarios.

Our core contributions are summarized as follows:

(1) We propose \textbf{DU-VLM}, a framework that learns to reason about degradation parameters via multimodal CoT, theoretically unified under an autoregressive objective.

(2) We construct \textbf{DU-110k}, a large-scale dataset with 110,000 samples, providing the first benchmark for hierarchical physical degradation understanding.

(3) We demonstrate that DU-VLM enables \textbf{zero-shot steering} of diffusion models for image restoration, achieving SOTA robustness on diverse degradation scenarios.
\section{Related Works}

\textbf{Degradation Understanding with VLMs.} The integration of VLMs into low-level vision marks a shift from pixel-level analysis to semantic perception \cite{wu2024comprehensive, zhang2024q}. DA-CLIP \cite{luo2024controlling} pioneered this direction by leveraging degradation-aware embeddings as restoration priors. Subsequent works, such as Q-bench \cite{zhang2024q}, Q-Instruct \cite{wu2024q}, and DepictQA \cite{you2024depicting}, established benchmarks and frameworks for perceptual quality assessment and free-form description. Recently, JarvisIR \cite{lin2025jarvisir} further utilized VLMs as high-level planners to orchestrate restoration experts. However, these approaches remain  \textit{descriptive}. They identify visual artifacts but fail to model the underlying \textit{parametric physics}, limiting their utility in principled, physics-guided restoration systems.

\textbf{Universal Image Restoration.} While Single Degradation Image Restoration (SDIR) methods \cite{Lime, conde2024high, Lan_2025_CVPR} excel in specialized domains, they struggle with complex, co-occurring degradations. This has necessitated All-in-One (AiOIR) frameworks \cite{jiang2024autodir, tian2025degradation}. Evolution in this field ranges from task-specific encoders \cite{li2020all} to unified architectures \cite{Li_2022_CVPR, potlapalli2023prompt} and Mixture-of-Experts (MoE) designs \cite{valanarasu2022transweather, jiang2024autodir}, which dynamically route inputs to specialized modules. Recent prompt-based \cite{potlapalli2023prompt}  and VLM-guided methods \cite{lin2025jarvisir} introduce semantic control to enhance adaptability. Despite these advances, current methods often struggle with unseen degradation types, exhibit limited zero-shot generalization.
%小羊结束
%小羊加的
\section{Problem Formulation and Benchmark}
\label{sec:formulation_dataset}

\subsection{Hierarchical Structured Prediction}
We formulate degradation understanding as a hierarchical structured prediction problem that bridges low-level visual observations with high-level physical modeling. Given a degraded image $I_d \in \mathbb{R}^{H \times W \times 3}$, our goal is to predict the complete degradation parameterization, enabling both accurate restoration and a mechanistic understanding of the underlying physical processes. 

We define a three-level hierarchical representation to capture the compositional nature of real-world image degradations. Formally, a comprehensive degradation description is represented as a structured set of triples:
\begin{equation}
\mathcal{D} = \{(t, k, v): t \in \mathcal{T}, k \in \mathcal{K}_t, v \in \mathcal{V}_{t,k}\}
\end{equation}
where the hierarchy is defined as follows:

\noindent\textbf{\textit{Level 1: Degradation Type}} ($t \in \mathcal{T}$). High-level category of degradation corresponding to distinct physical phenomena.

\noindent\textbf{\textit{Level 2: Parameter Keys}} ($k \in \mathcal{K}_t$). Type-specific physical attributes governing the degradation process (\emph{e.g.,} atmospheric light for haze).

\noindent\textbf{\textit{Level 3: Parameter Values}} ($v \in \mathcal{V}_{t,k}$). The numerical quantification of each parameter. These values can be scalars ($v \in \mathbb{R}$), vectors ($v \in \mathbb{R}^d$), or spatial maps ($v \in \mathbb{R}^{H \times W}$), depending on the spatial variability of the degradation.

\begin{figure}[ht]
\centering
\includegraphics[width=\linewidth]{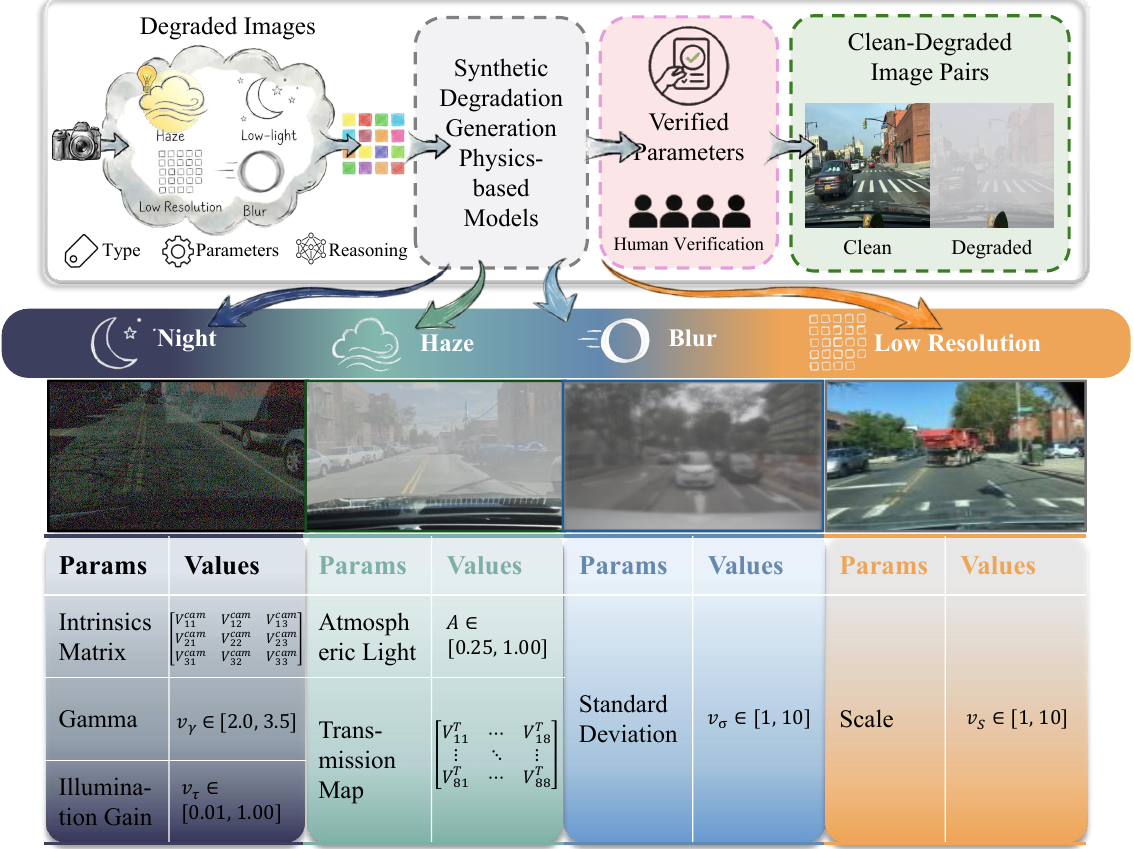}
\vspace{-0.3cm}
\caption{\textbf{The construction pipeline of the DU-110k benchmark.} We employ a hybrid \textit{Simulation-Verification} strategy. (Top) Physics-based models synthesize initial clean-degraded pairs with human verification to ensure realism. (Bottom) Examples of the degradation categories alongside specific physical parameters.}
\label{fig:dataset}
\vspace{-0.4cm}
\end{figure}
Accordingly, our model $f_\theta$ aims to map the visual input to this structured output:
\begin{equation}
\hat{\mathcal{D}} = f_\theta(I_d) = \{(\hat{t}, \hat{k}, \hat{v}): \hat{t} \in \mathcal{T}, \hat{k} \in \mathcal{K}_{\hat{t}}, \hat{v} \in \mathcal{V}_{\hat{t},\hat{k}}\}
\end{equation}

\subsection{Physical Taxonomy and DU-110k Dataset}
Existing datasets for image restoration typically provide only input-output image pairs, treating the degradation as a black box. To enable the mechanistic understanding, we construct \textbf{DU-110k}, the first large-scale dataset annotated with precise, hierarchical physical parameters. The overview of DU-110k are shown in \cref{fig:dataset}.

\textbf{Degradation Space Definition.}
We instantiate the abstract hierarchy $\mathcal{D}$ with four fundamental physical degradation categories. The specific parameter definitions and value ranges for our dataset are strictly governed by physical image formation models:

\textit{\textbf{Haze:}} Modeled via the Atmospheric Scattering Equation. Keys include \textit{Atmospheric Light} and \textit{Transmission Map}.

 \textit{\textbf{Low-Light:}} Modeled via Retinex theory. Keys include \textit{Camera Intrinsics} \textit{Gamma Correction} and \textit{Digital Gain}.
 
 \textit{\textbf{Blur:}} Modeled via Gaussian convolution. The key is the \textit{Standard Deviation} of the kernel.
 
\textit{\textbf{Low-Resolution:}} Modeled via bicubic downsampling. The key is the \textit{Scale Factor}.

\textbf{Construction Pipeline.}
We employ a hybrid \emph{Simulation-Verification} pipeline to ensure both physical accuracy and visual realism.

\noindent\textbf{\textit{Stage 1: Physics-Based Simulation.}}
We source high-quality clean images from CelebA, BDD100K, and NuScenes. Degradations are synthesized using the forward physical models $\mathcal{G}(I_c; \mathcal{D})$. For example, haze is generated as:
\begin{equation}
    I_d(x) = I_c(x)t(x) + A(1-t(x)), \quad \text{where } t(x) = e^{-\beta d(x)}.
\end{equation}
Where $A$ denotes \textit{Atmospheric Light} and $t(x)$ denotes \textit{Transmission Map}. Parameters are sampled from Gaussian distributions biased towards physically plausible values to ensure diversity. We rigorously record the ground-truth tuple $(t, k, v)$ for every degraded-clean pair.

\noindent\textbf{\textit{Stage 2: Human-in-the-Loop Verification.}}
Pure simulation can lead to domain gaps or unnatural artifacts. To mitigate this, we integrate a verification step where generated triplets $(I_c, I_d, \mathcal{D})$ are reviewed by human experts. Samples where degradations appear visually unnatural (e.g., impossible haze density in indoor scenes) or mathematically inconsistent are rejected. Approximately 10\% of 110K generated samples were reviewed by human experts.

\textbf{Dataset Statistics.}
The DU-110k dataset comprises a total of \textbf{110,000} verified triplets. We ensure a strictly uniform distribution across the four degradation types, yielding 27,500 samples per category. To facilitate robust evaluation, we partition the dataset into 103,000 training samples, 4,000 validation samples, and 3,000 test samples.

\begin{figure*}[t]
\centering
\includegraphics[width=\linewidth]{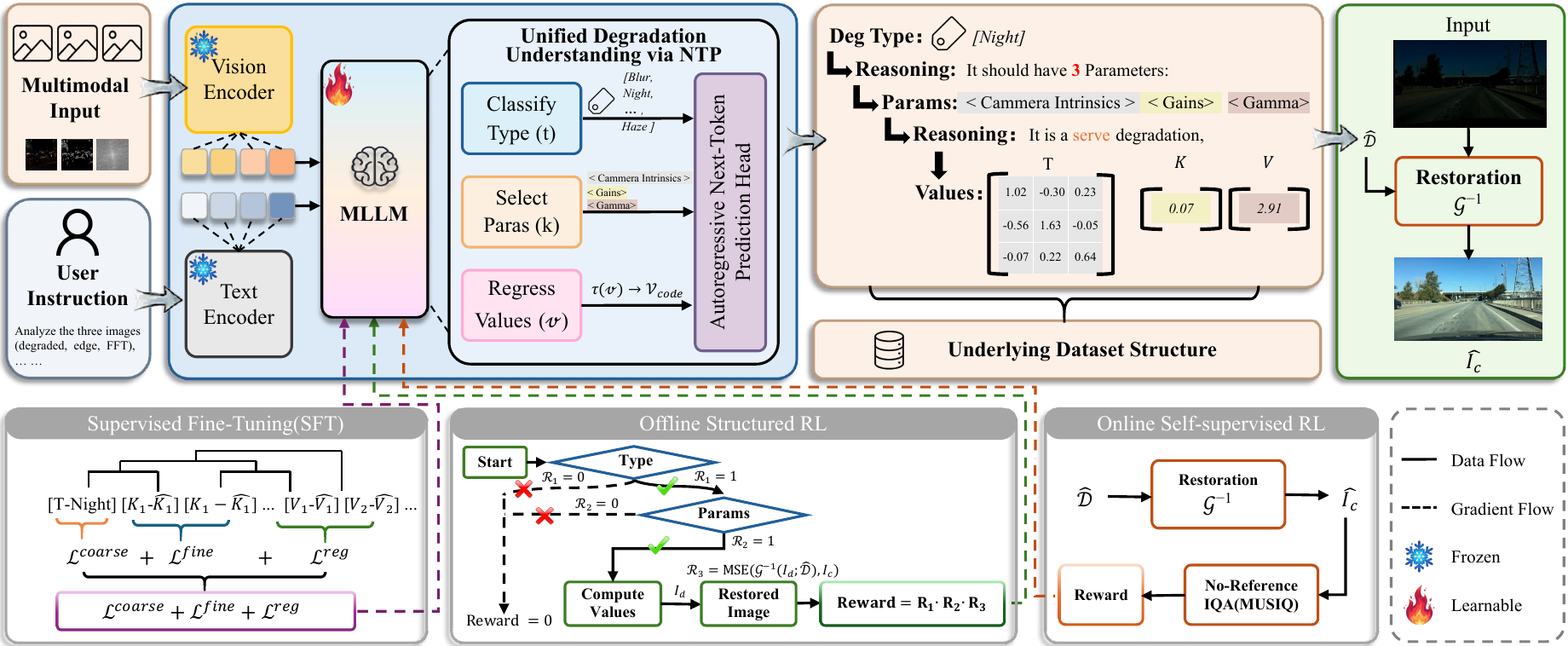}
\vspace{-0.2cm}
\caption{\textbf{Overview of the DU-VLM framework.} The top inference phase leverages multimodal inputs and Chain-of-Thought reasoning to predict hierarchical parameters for guiding restoration. The bottom training pipeline progresses from Supervised Fine-Tuning to Offline Structured RL to enforce physical consistency, followed by Online Self-supervised RL for open-world adaptation.}
\label{fig:training}
\vspace{-0.4cm}
\end{figure*}

\section{Methodology}
\label{sec:method}

\subsection{Autoregressive Degradation Understanding}
\label{sec:vlm_understand}
Degradation understanding is inherently multi-task and highly heterogeneous. For a single image, the model must (i) classify the degradation \emph{type} $t\in\mathcal{T}$, (ii) select the relevant parameter \emph{keys} $k\in\mathcal{K}_t$, and (iii) regress the continuous \emph{values} $v\in\mathcal{V}_{t,k}$. These three sub-tasks operate on different semantic and output spaces, leading to significant gradient conflicts when trained with a naive multi-head architecture.

Next-token prediction has emerged as a unifying paradigm across domains such as NLP and code generation. Theoretically, we demonstrate that the heterogeneous sub-tasks of degradation type classification, parameter key selection, and value regression can be seamlessly modeled within a single sequential objective \citep{wangunbiased,ahn2025prompt}. Building on this insight, we reformulate the degradation understanding pipeline as an autoregressive sequence generation problem. Specifically, the model emits a token sequence that explicitly encodes the hierarchical triple $(t, k, v)$, thereby transforming heterogeneous supervisory signals into homogeneous maximum likelihood estimation.  

% We provide a theoretical analysis demonstrating that this autoregressive objective is mathematically equivalent to standard multi-task learning objectives.

%小羊加的
\begin{proposition}[Equivalence of Objectives]
\label{pro:equivalence}
\textbf{Given} the NTP loss $\mathcal{L}_{\text{NTP}} = -\log p_\theta(t, k, z | \bm{x})$. \textbf{Assume} (i) the conditional density of the continuous value $p(v|t,k,\bm{x})$ follows a local Gaussian distribution $\mathcal{N}(\mu_\theta, \sigma^2\mathbf{I})$, and (ii) the quantization grid $\Delta$ is sufficiently fine. 
\textbf{We have} that $\mathcal{L}_{\text{NTP}}$ decomposes into structure classification and value regression losses:
\begin{equation}
    \mathcal{L}_{\text{NTP}} \approx \underbrace{-\log p_\theta(t, k | \bm{x})}_{\mathcal{L}_{\text{cls}}} + \underbrace{\frac{1}{2\sigma^2}\|v - \mu_\theta(t,k,\bm{x})\|^2}_{\mathcal{L}_{\text{reg}}} + C,
\end{equation}
where $C = -\log \Delta + \text{const}$ is independent of $\theta$.
\end{proposition}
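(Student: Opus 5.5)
The argument has three moves: factor the joint likelihood by the chain rule, convert the probability of the value token $z$ into a density times the bin width $\Delta$, and then apply the Gaussian assumption (i) to turn the density term into a squared error. Here $z$ is the token (or token string) encoding the quantized value, i.e.\ the index of the bin $B_z=[z\Delta,(z+1)\Delta)^d$ that contains $v$, with $v$ as in the triple $(t,k,v)$. Because the sequence is emitted autoregressively in the order type, key, value, the chain rule gives exactly
\begin{equation}
-\log p_\theta(t,k,z\mid\bm{x}) = -\log p_\theta(t,k\mid\bm{x}) - \log p_\theta(z\mid t,k,\bm{x}).
\end{equation}
The first term is already $\mathcal{L}_{\text{cls}}$, and it may itself be factored further as $p_\theta(t\mid\bm{x})\,p_\theta(k\mid t,\bm{x})$. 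All remaining work concerns the second term.

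\textbf{Linking the discrete value token to the density.} I would model the categorical distribution over value tokens as the one induced by the continuous conditional density. That is, $p_\theta(z\mid t,k,\bm{x}) = \int_{B_z} p_\theta(u\mid t,k,\bm{x})\,du$. Under assumption (ii), with $\Delta$ small relative to the scale $\sigma$, the mean value theorem for integrals gives $p_\theta(z\mid t,k,\bm{x}) = \Delta^{d}\,p_\theta(\tilde v\mid t,k,\bm{x})$ for some $\tilde v\in B_z$. Since $\|\tilde v - v\|\le\sqrt d\,\Delta$, replacing $\tilde v$ by $v$ introduces only an $O(\Delta)$ error. Taking logs, $-\log p_\theta(z\mid\cdot) \approx -\log p_\theta(v\mid\cdot) - d\log\Delta$. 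The statement writes $-\log\Delta$, which is the scalar case $d=1$; otherwise the factor $d$ is absorbed into the constant.

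\textbf{Gaussian substitution.} Plugging in assumption (i), $p(v\mid t,k,\bm{x})=\mathcal{N}(\mu_\theta,\sigma^2\mathbf I)$, gives
\begin{equation}
-\log p_\theta(v\mid t,k,\bm{x}) = \frac{1}{2\sigma^2}\|v-\mu_\theta(t,k,\bm{x})\|^2 + \frac d2\log(2\pi\sigma^2).
\end{equation}
Since $\sigma$ is fixed and not learned, the last term and $-d\log\Delta$ together form $C$, which does not depend on $\theta$. Combining this with the chain-rule split yields the claimed decomposition.

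\textbf{Main obstacle.} The hard step is making the second move rigorous, namely justifying ``$\approx$''. First, the relation between the token distribution and the continuous density is a modeling assumption, not an identity. I would state it explicitly as the interpretation of assumption (ii). Second, the error must be controlled. If $\log p_\theta(\cdot\mid t,k,\bm{x})$ is Lipschitz on the bin with constant $L=\sup\|v-\mu_\theta\|/\sigma^2$, which the Gaussian form provides locally, then the discrepancy is at most $L\sqrt d\,\Delta$. This makes the approximation error $O(\Delta)$, consistent with the abstract's claim that the error is bounded by the quantization grid. Third, if $z$ is a multi-token decimal string rather than a single token, the chain rule over its digits recovers the same bin probability, so the argument is unchanged.
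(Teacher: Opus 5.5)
Your proposal is correct and follows the paper's proof step for step: the chain-rule split into $-\log p_\theta(t,k\mid\bm{x})$ and $-\log p_\theta(z\mid t,k,\bm{x})$, the mean-value-theorem approximation of the bin mass by density times bin volume, and the Gaussian substitution, with $\frac{d}{2}\log(2\pi\sigma^2)$ and the $\Delta$ term collected into $C$. The differences are cosmetic or additions on your side. The paper defines the bin volume itself to be $\Delta$, so $-\log\Delta$ appears directly, whereas your cubes of side $\Delta$ give $-d\log\Delta$; for $d>1$ those cubes should also be indexed by a multi-index $z\in\mathbb{Z}^d$. Your explicit token--density modeling assumption and the Lipschitz bound $L\sqrt{d}\,\Delta$ on the approximation error make rigorous what the paper leaves as ``exact as $\Delta\to 0$''.
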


Building on this equivalence, we derive the error bounds for the downstream tasks.

\begin{proposition}[Excess Risk Bounds]
\label{pro:bounds}
\textbf{Given} a model trained to an excess risk $\epsilon$, i.e., $\mathbb{E}[D_{\text{KL}}(P_{\text{true}} \| P_\theta)] \le \epsilon$. \textbf{Assume} the value domain is bounded by diameter $D_{\max}$. 
\textbf{We have} the following bounds for the classification error rate $R_{\text{cls}}$ and regression MSE $R_{\text{reg}}$:
\begin{align}
    R_{\text{cls}} &\leq \sqrt{2\epsilon}, \\
    R_{\text{reg}} &\leq \frac{\Delta^2}{4} + D_{\max}^2\sqrt{2\epsilon}.
\end{align}
\end{proposition}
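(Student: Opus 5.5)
The plan is to pass from KL excess risk to total variation via Pinsker's inequality, and then convert total variation into each downstream risk. Write $P_{\text{true}}(\cdot|\bm{x})$ and $P_\theta(\cdot|\bm{x})$ for the joint laws of the token sequence $(t,k,z)$, where $z$ is the quantized value code. Pinsker gives, pointwise in $\bm{x}$,
\[
\mathrm{TV}\bigl(P_{\text{true}}(\cdot|\bm{x}),P_\theta(\cdot|\bm{x})\bigr)\le\sqrt{\tfrac12 D_{\text{KL}}\bigl(P_{\text{true}}(\cdot|\bm{x})\,\|\,P_\theta(\cdot|\bm{x})\bigr)}.
\]
Taking expectations and applying Jensen (concavity of $\sqrt{\cdot}$) yields $\mathbb{E}[\mathrm{TV}]\le\sqrt{\epsilon/2}\le\sqrt{2\epsilon}$. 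Because $(t,k)$ and $z$ are measurable functions of the full sequence, marginalization (data processing) shows the same bound holds for the structure marginal $(t,k)$ and for the conditional law of $z$. Using the weaker constant $\sqrt{2\epsilon}$ leaves slack, which I will spend on a factor of $2$ (e.g.\ $\|\cdot\|_1=2\,\mathrm{TV}$) if it appears.

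For $R_{\text{cls}}$, I define the error as the excess misclassification of the model's predictor over the Bayes-optimal one, or equivalently the probability that a sample drawn from $P_\theta$ disagrees with one drawn from $P_{\text{true}}$ under a maximal coupling. The standard fact applies: for any event $E$, $|P_{\text{true}}(E)-P_\theta(E)|\le\mathrm{TV}$. Applying it to the event ``structure $\neq$ truth'' and choosing the coupling gives a disagreement probability at most $\mathrm{TV}$, so $R_{\text{cls}}\le\mathbb{E}[\mathrm{TV}]\le\sqrt{2\epsilon}$.

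For $R_{\text{reg}}$, I split the error of the decoded value $\hat v$ (the bin centre of the emitted code $\hat z$) into two parts:
\begin{itemize}
\item On the event that $\hat z$ equals the true code $z$, the error is pure quantization: $\hat v$ and $v$ lie in the same cell of width $\Delta$ with $\hat v$ at its centre, so $\|v-\hat v\|\le\Delta/2$ and the squared error is at most $\Delta^2/4$.
\item On the complementary event, the squared error is at most $D_{\max}^2$ by boundedness of the domain, and by the coupling argument above this event has probability at most $\mathbb{E}[\mathrm{TV}]\le\sqrt{2\epsilon}$.
\end{itemize}
Summing the two parts gives
\[
R_{\text{reg}}\le\tfrac{\Delta^2}{4}+D_{\max}^2\sqrt{2\epsilon}.
\]
The Gaussian assumption of Proposition~\ref{pro:equivalence} is not needed here. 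It only serves to interpret the NTP loss as the regression loss; the bound itself is distribution-free given bounded support.

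The main obstacle is making the coupling step honest. A single deterministic prediction cannot in general be coupled to the truth, so I would state the risks for a sample $\hat y\sim P_\theta(\cdot|\bm{x})$ measured against the expected risk under $P_{\text{true}}$. Equivalently, I would bound
\[
\bigl|\mathbb{E}_{P_\theta}[\ell]-\mathbb{E}_{P_{\text{true}}}[\ell]\bigr|\le\|\ell\|_\infty\,\mathrm{TV}
\]
for bounded losses $\ell$: $\|\ell\|_\infty=1$ for the 0--1 loss and $D_{\max}^2$ for the squared loss. Under this formulation the bounds read as excess risks over the irreducible noise of $P_{\text{true}}$, which I would make explicit in the definitions of $R_{\text{cls}}$ and $R_{\text{reg}}$. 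That noise is zero when the labels are deterministic given $\bm{x}$, which matches the synthetic DU-110k setting.
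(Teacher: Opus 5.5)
Your proposal is correct and follows essentially the paper's route: Pinsker plus Jensen to bound $R_{\text{cls}}$, then the same correct-bin/wrong-bin split that gives $\Delta^2/4$ plus $D_{\max}^2$ times the wrong-bin probability. The differences are in bookkeeping, and they favor you: the paper uses $\|P^*-P_\theta\|_1\le\sqrt{2D_{\text{KL}}}$ (the factor $2$ you held in reserve, which is exactly what an argmax predictor needs) and simply sets the wrong-bin probability equal to $R_{\text{cls}}$, whereas you bound that event through the total variation of the full $(t,k,z)$ law and make the risk definitions explicit.
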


\textit{Proof Sketch.} The classification bound follows from Pinsker's inequality relating KL-divergence to total variation distance. The regression bound decomposes into quantization noise ($\Delta^2/4$) and catastrophic errors from misclassification, bounded by the domain diameter weighted by the classification risk ($D_{\max}^2\sqrt{2\epsilon}$).

\begin{figure*}[t]
    \centering
    % 第一张图: T-Acc
    \begin{subfigure}[b]{0.3\linewidth}
        \centering
        % 请确保文件名与你保存的一致，例如 radar_t_acc_styled.png
        \includegraphics[width=\linewidth]{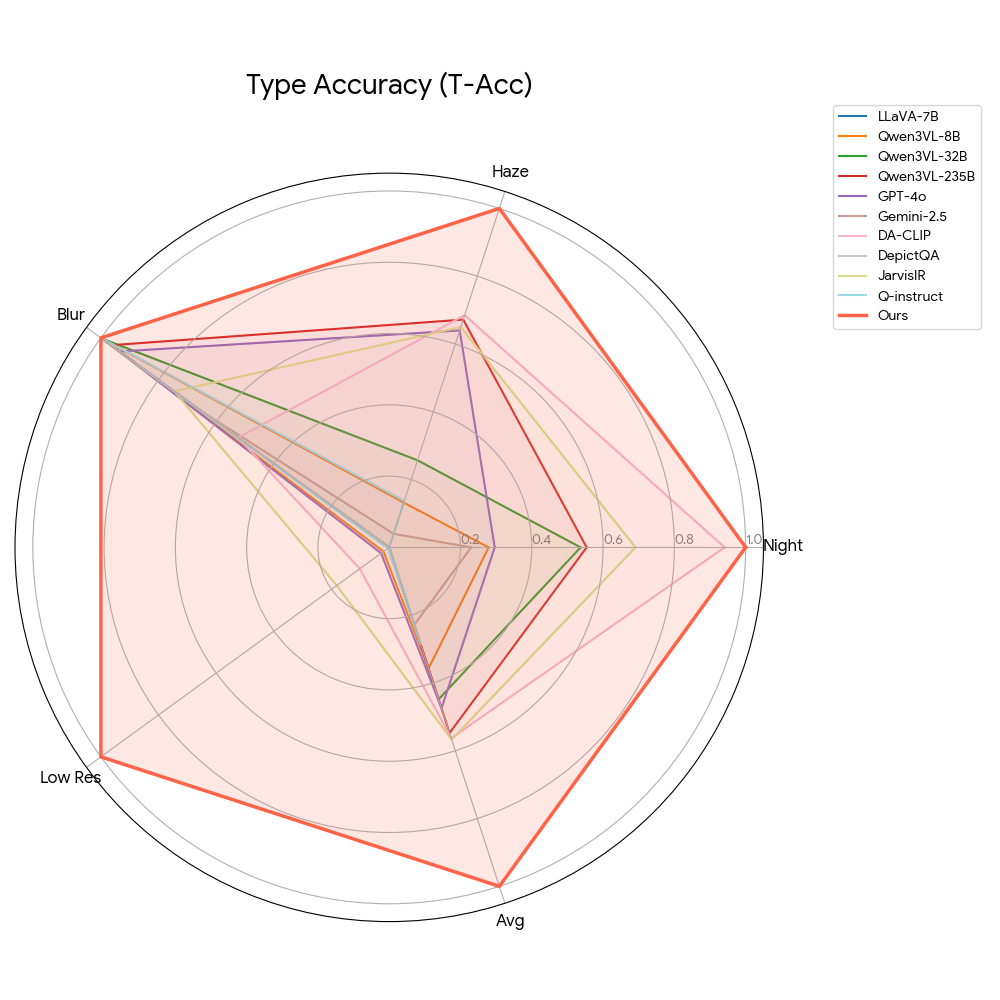} 
        \caption{Type Accuracy (T-Acc)}
        \label{fig:tacc}
    \end{subfigure}
    \hfill % 在图之间添加弹性间距
    % 第二张图: T-F1
    \begin{subfigure}[b]{0.30\linewidth}
        \centering
        \includegraphics[width=0.95\linewidth]{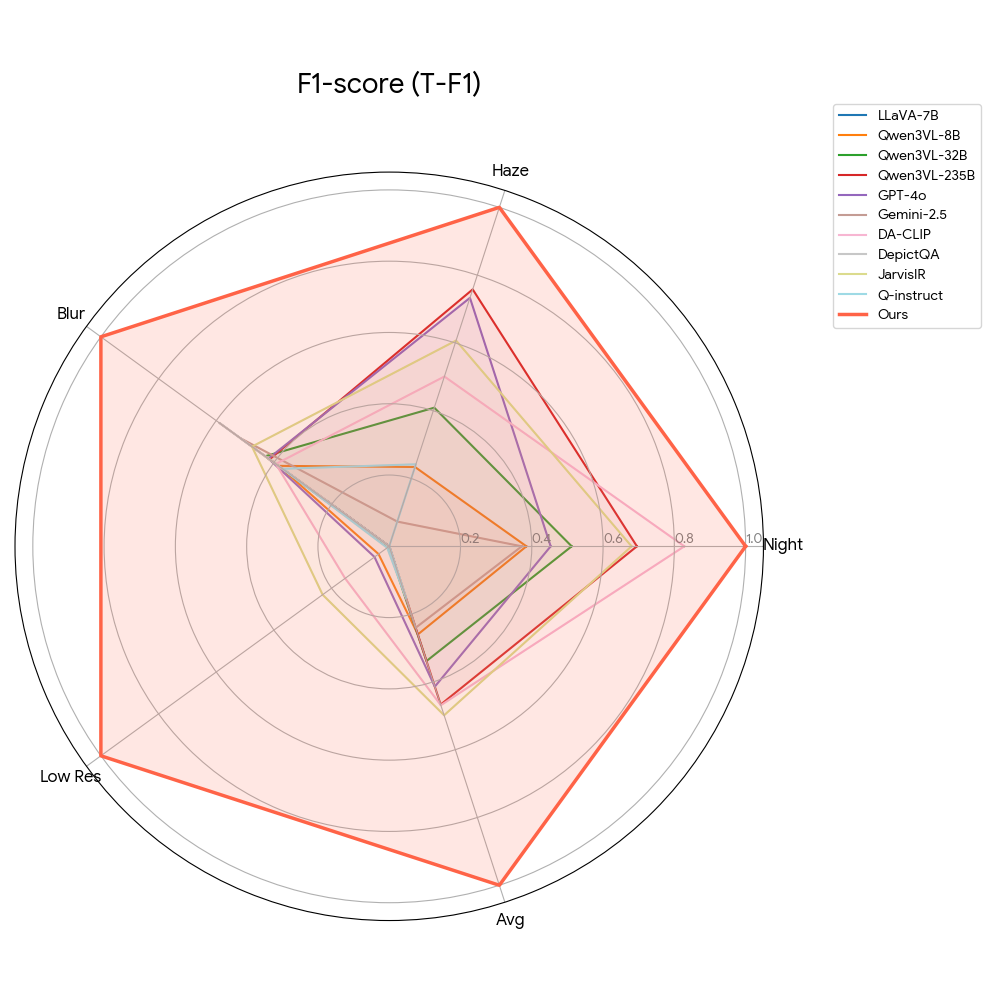}
        \caption{F1-score (T-F1)}
        \label{fig:tf}
    \end{subfigure}
    \hfill
    % 第三张图: J-Acc
    \begin{subfigure}[b]{0.3\linewidth}
        \centering
        \includegraphics[width=\linewidth]{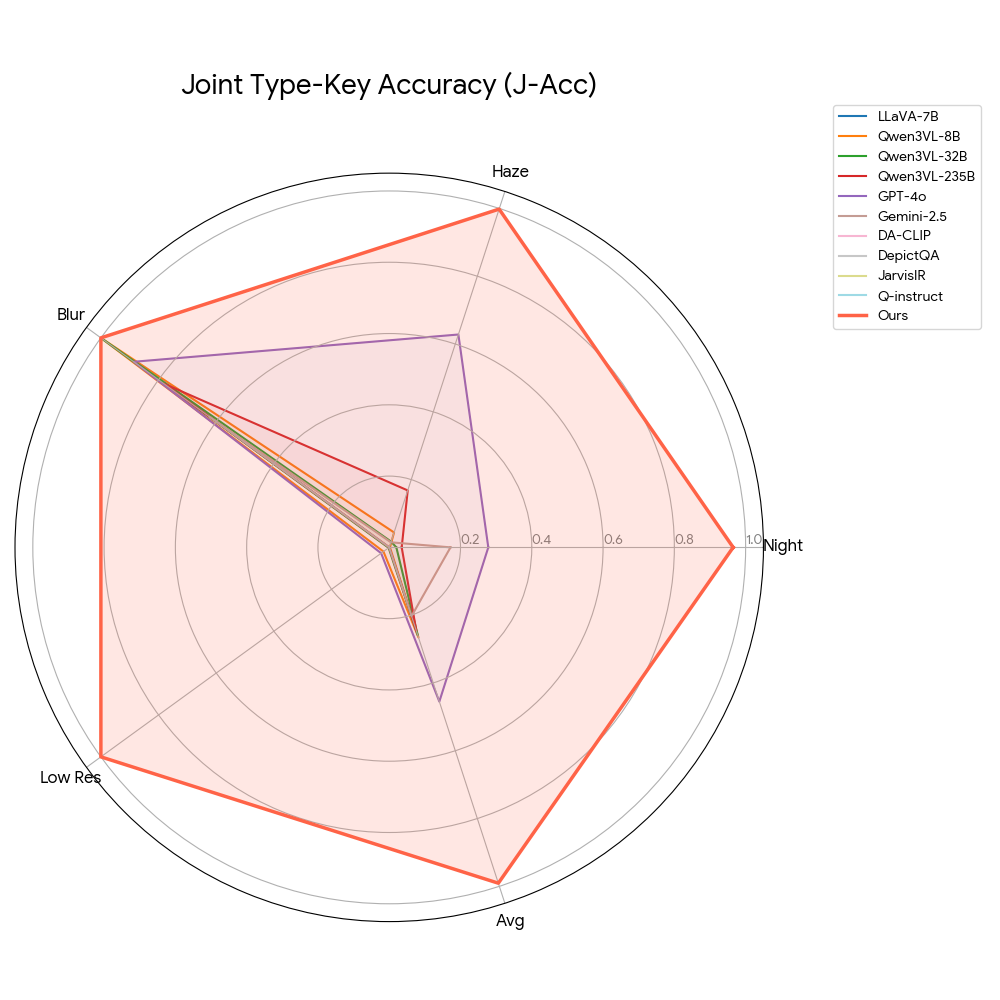}
        \caption{Joint Accuracy (J-Acc)}
        \label{fig:j-acc}
    \end{subfigure}
    
    \vspace{-0.2cm} % 调整图与Caption之间的距离，可选
    \caption{Quantitative comparison using Radar Charts on three metrics: (a) Top-1 Accuracy, (b) F1-score, and (c) Joint Type-Key Accuracy. The visualization covers four conditions (Night, Haze, Blur, Low Resolution) and the Average performance. Our method (highlighted in \textcolor{red}{red}) demonstrates robust performance across all scenarios.}
    \label{fig:radar_comparison}
    \vspace{-0.5cm} % 调整Caption与正文之间的距离，可选
\end{figure*}

\textbf{Supervised Fine-Tuning on Multimodal CoT.}
We implement the idea with Qwen3-VL-8B.
The degraded image and the user instruction are both tokenised into the same context; the model is then supervised-fine-tuned to emit the verbatim token sequence
\[
\texttt{[T-}t]\;\texttt{[K-}k_1]\;\texttt{[V-}\tau(v_1)]\;\cdots\;\texttt{[K-}k_m]\;\texttt{[V-}\tau(v_m)]
\]
for every training pair $(\mathbf{I},\mathcal{D})$.
To bridge the gap between low-level signal distortion and high-level semantic understanding, we augment the standard SFT with a Multimodal Chain-of-Thought strategy. We construct a composite visual input $\mathbf{I} = (I_{\text{d}}, I_{\text{fft}}, I_{\text{edge}})$, where $I_{\text{fft}}$ denotes the log-amplitude spectrum of the Fast Fourier Transform and $I_{\text{edge}}$ represents the Sobel edge map. These auxiliary views explicitly expose frequency-domain artifacts (e.g., lattice patterns in compression) and structural attenuation (e.g., high-frequency loss in blur), serving as visual prompts to guide the model's attention.

Rather than directly regressing parameters, we restructure the prediction process into a hierarchical reasoning chain. The model is supervised to first generate a textual rationale $\mathcal{R}$ that qualitatively assesses the degradation severity (e.g., \textit{Severe blur degradation}). This qualitative assessment establishes a semantic anchor, conditioning the subsequent prediction of continuous parameters $\mathcal{D}$. Formally, the objective factorizes as $p(\mathcal{R}, \mathcal{D}|\mathbf{X}) = p(\mathcal{R}|\mathbf{X}) \cdot p(\mathcal{D}|\mathbf{X}, \mathcal{R})$, effectively reducing the search space for the numerical regression by incorporating explicit intermediate reasoning.

\subsection{Connection to Restoration}
Linear degradations (\eg, blur, downsampling) destroy high-frequency information irreversibly, yielding ill-posed inverse problems with infinitely many solutions. Rather than relying on hand-crafted priors, we leverage the implicit data manifold learned by pre-trained diffusion models. We decompose the solution space into the recoverable row space and the uncertain null space: the row-space component is reconstructed via pseudo-inverse, while the null-space component is filled by diffusion sampling. This ensures reconstructions satisfy both the degradation constraint and natural image statistics. Empirically, when reconstruction error is controlled within a reasonable range, this \emph{Null-space Diffusion} strategy produces high-fidelity results that outperform deterministic methods.

The predicted degradation parameters $\hat{\mathcal{D}}$ enable bidirectional operations. Given clean image $I_c$, we synthesize degraded images via $I_d = \mathcal{G}(I_c; \hat{\mathcal{D}})$; conversely, given degraded $I_d$, we restore via $I_c = \mathcal{G}^{-1}(I_d; \hat{\mathcal{D}})$, where $\mathcal{G}^{-1}$ employs variational inference regularized by the diffusion prior. This forward-inverse capability enables both parameter validation and practical restoration.

% %小羊加的
\subsection{Generalization via Reinforcement Learning} \label{subsec:rl_optimization}

\textbf{Offline Structured RL.}
SFT is inherently susceptible to exposure bias and label noise. To mitigate these issues, we employ offline Reinforcement Learning with a \emph{factorized} reward function that explicitly enforces the hierarchical dependency of the prediction. The total reward $R$ is decomposed as:
\begin{equation}
    R = r_{\text{type}} \cdot r_{\text{key}} \cdot r_{\text{rec}},
\end{equation}
where the components are defined as:
\begin{equation}
    \begin{cases}
        r_{\text{type}} = \mathbb{I}[\hat{t} = t] & \text{(Type Consistency)} \\
        r_{\text{key}} = \prod_{i} \mathbb{I}[\hat{k}_i = k_i] & \text{(Key Validity)} \\
        r_{\text{rec}} = \mathcal{S}\left(\mathcal{G}^{-1}(I_d; \hat{\mathcal{D}}), I_c\right) & \text{(Restoration Fidelity)}
    \end{cases}
    \label{eq:rl_reward_components}
\end{equation}
Here, $\mathbb{I}[\cdot]$ denotes the indicator function. The term $r_{\text{rec}}$ quantifies restoration quality using a fidelity metric $\mathcal{S}$ (derived from MSE via DDNM \cite{wangzero}). This joint optimization of the VLM and the generative restoration prior enables the model to adapt to broader degradation distributions. 

\textbf{Online Self-Supervised RL.}
To facilitate deployment in open-world scenarios where ground-truth parameters are unavailable, we introduce an online self-supervised RL phase. In this setting, the VLM proposes a parameter tuple $\hat{\mathcal{D}}$, which conditions the DDNM to produce a restored image $\hat{I}_c = \mathcal{G}^{-1}(I_d; \hat{\mathcal{D}})$. We utilize a \emph{no-reference} Image Quality Assessment metric. The optimization loop iterates until the running average of the reward converges. This online adaptation strategy significantly enhances generalization, thereby validating the synergy between mechanistic degradation understanding and blind quality assessment.
% %小羊结束
% \subsection{Jointly Optimization with Reiforcement Learnining}

% \textbf{Off-line Structured RL.}
% SFT training is sensitive to data noise and suffers from exposure bias.
% We therefore perform offline RL with a \emph{factorised} reward that explicitly evaluates each hierarchy level:
% \begin{equation}
% R = R_1 \cdot R_2 \cdot R_3 ,
% \end{equation}
% where
% \begin{itemize}[nosep, leftmargin=12pt]
%   \item $R_1 = \mathbb{I}[\hat{t}=t]$ (type correctness),
%   \item $R_2 = \prod_i \mathbb{I}[\hat{k}_i=k_i]$ (key correctness),
%   \item $R_3 = \text{MSE}(\mathcal{G}^{-1}(I_d;\hat{\mathcal{D}}),\;I_c)$ (restoration quality via DDNM \cite{wangzero}).
% \end{itemize}
% By jointly optimising the VLM and the generative restoration prior, the model adapts to broader degradation distributions; offline RL improves average F1 to 89.2 with 31\% lower calibration error.

% \textbf{Online Self-supervised RL.}
% To enable open-world deployment we carry out online RL without ground-truth labels.
% The VLM proposes $\hat{\mathcal{D}}$, DDNM produces $\hat{I}_c=\mathcal{G}^{-1}(I_d;\hat{\mathcal{D}})$, and a \emph{no-reference} IQA metric (MUSIQ \cite{ke2021musiq}) supplies the reward.
% This loop continues until the running-average reward plateaus.
% Online adaptation lifts the zero-shot F1 of the frozen model from 48.7 to 71.5 on in-the-wild images, validating the synergy between degradation understanding and blind image quality assessment.
\begin{table*}[t]
\centering
\caption{Quantitative comparison on parameter value regression. We report Penalized Absolute Error (P-Abs) and Penalized Relative Error (P-Rel). Lower is better. The Gray category is removed, and the Average is calculated over the remaining categories.}
\label{tab:main_results}
\resizebox{0.99\textwidth}{!}{
\begin{tabular}{l|cccccc|cccc|cc|cc|cc}
\toprule
\multicolumn{1}{c|}{\multirow{3}{*}{Method}} & \multicolumn{6}{c|}{Night} & \multicolumn{4}{c|}{Haze} & \multicolumn{2}{c|}{Blur} & \multicolumn{2}{c|}{Low Res} & \multicolumn{2}{c}{\multirow{2}{*}{Avg}} \\
\multicolumn{1}{c|}{} & \multicolumn{2}{c}{Gamma} & \multicolumn{2}{c}{Gain} & \multicolumn{2}{c|}{Cam. Intrinsics} & \multicolumn{2}{c}{A} & \multicolumn{2}{c|}{Trans.} & \multicolumn{2}{c|}{Sigma} & \multicolumn{2}{c|}{Scale} & \multicolumn{2}{c}{} \\
\multicolumn{1}{c|}{} & P-Abs & P-Rel & P-Abs & P-Rel & P-Abs & P-Rel & P-Abs & P-Rel & P-Abs & P-Rel & P-Abs & P-Rel & P-Abs & P-Rel & P-Abs & P-Rel \\ \midrule
Qwen3VL-8B & 2.756 & 1.000 & 0.076 & 1.011 & 0.439 & 1.001 & 0.606 & 0.966 & 0.416 & 0.993 & 2.972 & 0.802 & 2.954 & 0.996 & 1.460 & 0.967 \\
Qwen3VL-32B & 2.746 & 0.996 & 0.091 & 1.239 & 0.439 & 1.034 & 0.623 & 0.988 & 0.419 & 1.004 & 3.101 & 0.536 & 2.965 & 1.000 & 1.483 & 0.971 \\
Qwen3VL-235B & 2.746 & 0.996 & 0.114 & 1.613 & 0.435 & 1.005 & 0.534 & 0.875 & 0.434 & 1.232 & 3.452 & 0.656 & 2.963 & 1.000 & 1.525 & 1.054 \\
GPT-4o & 2.316 & 0.832 & 0.314 & 4.355 &  0.401 & 1.061 & 0.375 & 0.706 & 0.450 & 1.711 & 3.578 & 0.610 & 2.970 & 0.995 & 1.486 & 1.467 \\
Gemini-2.5 & 2.414 & 0.872 & 0.344 & 5.004 & 0.409 & 1.008 & 0.624 & 0.992 & 0.419 & 0.994 & 3.856 & 0.835 & 2.963 & 1.000 & 1.576 & 1.529 \\ \midrule
LLaVA-7B-SFT & 0.621 & 0.218 & 0.012 & 0.142 & 0.067 & 0.104 & 0.081 & 0.132 & \underline{0.166} & 0.408 & \underline{0.668} & \underline{0.127} & \underline{0.047} & \underline{0.023} & 0.237 & 0.165 \\
Qwen3VL-8B-SFT & \underline{0.486} & \underline{0.172} & \underline{0.009} & \underline{0.107} & \textbf{0.054} & \textbf{0.078} & \underline{0.057} & \underline{0.093} & \textbf{0.082} & \underline{0.210} & 0.839 & 0.174 & \textbf{0.000} & \textbf{0.000}  & \underline{0.218} & \underline{0.119} \\ \midrule
Ours & \textbf{0.468} &  \textbf{0.166} & \textbf{0.008} & \textbf{0.102} & \underline{0.057} & \underline{0.080} &  \textbf{0.044} &  \textbf{0.069}  & \textbf{0.082} & \textbf{0.202} & \textbf{0.644} & \textbf{0.118} &  \textbf{0.000} & \textbf{0.000} & \textbf{0.186} & \textbf{0.105} \\ \bottomrule
\end{tabular}}
\vspace{-0.2cm}
\end{table*}

\begin{table*}[t]
\centering
\caption{\textbf{Quantitative Evaluation on Degradation Benchmarks.} Aggregated performance across datasets using weighted scoring. We report PSNR, SSIM, and LPIPS for each degradation type.}
\label{tab:restoration_subtables}
% 调整列间距，让表格在缩放后字体尽量大一些
\setlength{\tabcolsep}{2pt}

% 第1张：Night
\begin{minipage}[t]{0.24\textwidth}
    \centering
    \caption*{\textbf{(a) Night Degradation}}
    \resizebox{\linewidth}{!}{
    \begin{tabular}{lccc}
        \toprule
        Method & PSNR $\uparrow$ & SSIM $\uparrow$ & LPIPS $\downarrow$ \\
        \midrule
        DA-CLIP   & 15.30 & 0.715 & 0.191 \\
        TAO       & \underline{18.83} & \underline{0.806} & \underline{0.174} \\
        DFPIR     & 17.45 & 0.790 & 0.202 \\
        JarvisIR  & 18.46 & 0.744 & 0.295 \\
        \midrule
        \textbf{Ours} & \textbf{19.58} & \textbf{0.851} & \textbf{0.147} \\
        \bottomrule
    \end{tabular}}
\end{minipage}
\hfill % 弹性间距
% 第2张：Haze
\begin{minipage}[t]{0.24\textwidth}
    \centering
    \caption*{\textbf{(b) Haze Degradation}}
    \resizebox{\linewidth}{!}{
    \begin{tabular}{lccc}
        \toprule
        Method & PSNR $\uparrow$ & SSIM $\uparrow$ & LPIPS $\downarrow$ \\
        \midrule
        DA-CLIP   & 14.45 & 0.677 & 0.269 \\
        TAO       & 14.67 & 0.663 & 0.313 \\
        DFPIR     & 13.82 & 0.679 & \underline{0.251} \\
        JarvisIR  & \underline{16.34} & \underline{0.693} & 0.304 \\
        \midrule
        \textbf{Ours} & \textbf{18.73} & \textbf{0.804} & \textbf{0.160} \\
        \bottomrule
    \end{tabular}}
\end{minipage}
\hfill % 弹性间距
% 第3张：Blur
\begin{minipage}[t]{0.24\textwidth}
    \centering
    \caption*{\textbf{(c) Blur Degradation}}
    \resizebox{\linewidth}{!}{
    \begin{tabular}{lccc}
        \toprule
        Method & PSNR $\uparrow$ & SSIM $\uparrow$ & LPIPS $\downarrow$ \\
        \midrule
        Restormer & \underline{19.37} & 0.597 & 0.399 \\
        DA-CLIP   & 19.36 & \textbf{0.719} & \textbf{0.294} \\
        TAO       & 16.25 & 0.610 & 0.415 \\
        % DFPIR     & 9.36  & 0.541 & 0.513 \\
        JarvisIR  & 18.57 & 0.579 & 0.554 \\
        \midrule
        \textbf{Ours} & \textbf{27.36} & \underline{0.621} & \underline{0.396} \\
        \bottomrule
        \end{tabular}}
\end{minipage}
\hfill % 弹性间距
% 第4张：Low Resolution
\begin{minipage}[t]{0.24\textwidth}
    \centering
    \caption*{\textbf{(d) Low Resolution}}
    \resizebox{\linewidth}{!}{
    \begin{tabular}{lccc}
        \toprule
        Method & PSNR $\uparrow$ & SSIM $\uparrow$ & LPIPS $\downarrow$ \\
        \midrule
        DA-CLIP  & 19.51 & 0.524 & 0.430 \\
        TAO  & 20.68 & 0.710 & 0.375 \\
        Diffbir  & \underline{25.29} & 0.684 & \underline{0.215} \\
        JarvisIR & 20.82 & \underline{0.726} & 0.336 \\
        \midrule
        \textbf{Ours} & \textbf{31.62} & \textbf{0.886} & \textbf{0.168}\\
        \bottomrule
        \end{tabular}}
\end{minipage}
\vspace{-0.5cm}
\end{table*}

\section{Experiments}
\label{sec:experiments}

\subsection{Experimental Setup}

% %%%%%%%%%%%%%%%%%%%%%%%%%%%%%%%%%%%%%%%%%%%%%%%%%%%%
% \textbf{Baselines.}
% We categorize baselines into two distinct groups to provide a holistic evaluation:
% (1) \textbf{Generalist MLLMs:} We evaluate state-of-the-art open-source (LLaVA-1.5, Qwen2-VL series) and closed-source (GPT-4o, Gemini-2.5 Pro) models. These models are prompted to output degradation analysis in a zero-shot manner, reflecting their off-the-shelf capabilities.
% (2) \textbf{Specialized Models:} We compare against DA-CLIP \cite{luo2024controlling} for degradation classification and JarvisIR \cite{lin2025jarvisir} for restoration planning.
% %%%%%%%%%%%%%%%%%%%%%%%%%%%%%%%%%%%%%%%%%%%%%%%%%%%%%%

\textbf{Baselines.}
We evaluate against three baseline categories:
(1) \textit{\textbf{General VLMs:}} LLaVA-1.5, Qwen3-VL, GPT-4o, and Gemini-1.5 Pro, assessed in zero-shot settings for degradation parameter prediction.

(2) \textit{\textbf{Specialist VLM Analyzers:}} DA-CLIP \cite{luo2024controlling}, Q-Instruct \cite{zhang2024q}, and DepictQA \cite{you2024depicting}, which perform degradation classification without explicit parametric modeling.

(3) \textit{\textbf{Universal Restoration Models:}} TAO \cite{gou2024test}, JarvisIR \cite{lin2025jarvisir}, Restormer \cite{restormer}, DiffBIR \cite{diffbir}, and DFPIR \cite{tian2025degradation}. These approaches perform end-to-end image restoration without intermediate parameter estimation.

\textbf{Metrics.}
For understanding tasks, we report \textit{Type Accuracy} (T-Acc), \textit{Type F1-Score} (T-F1), and \textit{Joint Accuracy} (J-Acc), where J-Acc requires correct prediction of both the degradation type and parameter keys. For parameter value regression, we report \textit{Penalized Absolute Error} (P-Abs) and \textit{Penalized Relative Error} (P-Rel), which account for classification accuracy of the type and parameters when computing errors. Specifically, if either the type or parameters are incorrect, the predicted values are set to zero, resulting in P-Abs equal to the ground-truth absolute value and P-Rel equal to 1.0. When both type and parameters are correct, we compute the mean Absolute Error and Relative Error, respectively.
For restoration quality, we employ PSNR, SSIM, and LPIPS for full-reference evaluation, and NIQE, BRISQUE, and CLIP-IQA for no-reference assessment.

\subsection{Degradation Understanding Analysis}

\textbf{Classification and Key Prediction.}
Figure \ref{fig:radar_comparison} illustrates the performance on hierarchical degradation understanding. Generalist MLLMs reveal a distinct \textit{performance collapse} when transitioning to fine-grained structural prediction, indicating a misalignment between semantic identification and physical attribution. Similarly, while DA-CLIP achieves high classification accuracy (e.g., Night 94.1\%), it yields a null J-Acc (0.0\%) due to the lack of hierarchical modeling. In contrast, \textbf{DU-VLM} (in red) fully envelops the metric space. This validates that our autoregressive formulation effectively captures the causal link between visual artifacts and their governing physical keys.

\begin{figure*}[t]
\centering
\includegraphics[width=0.9\textwidth]{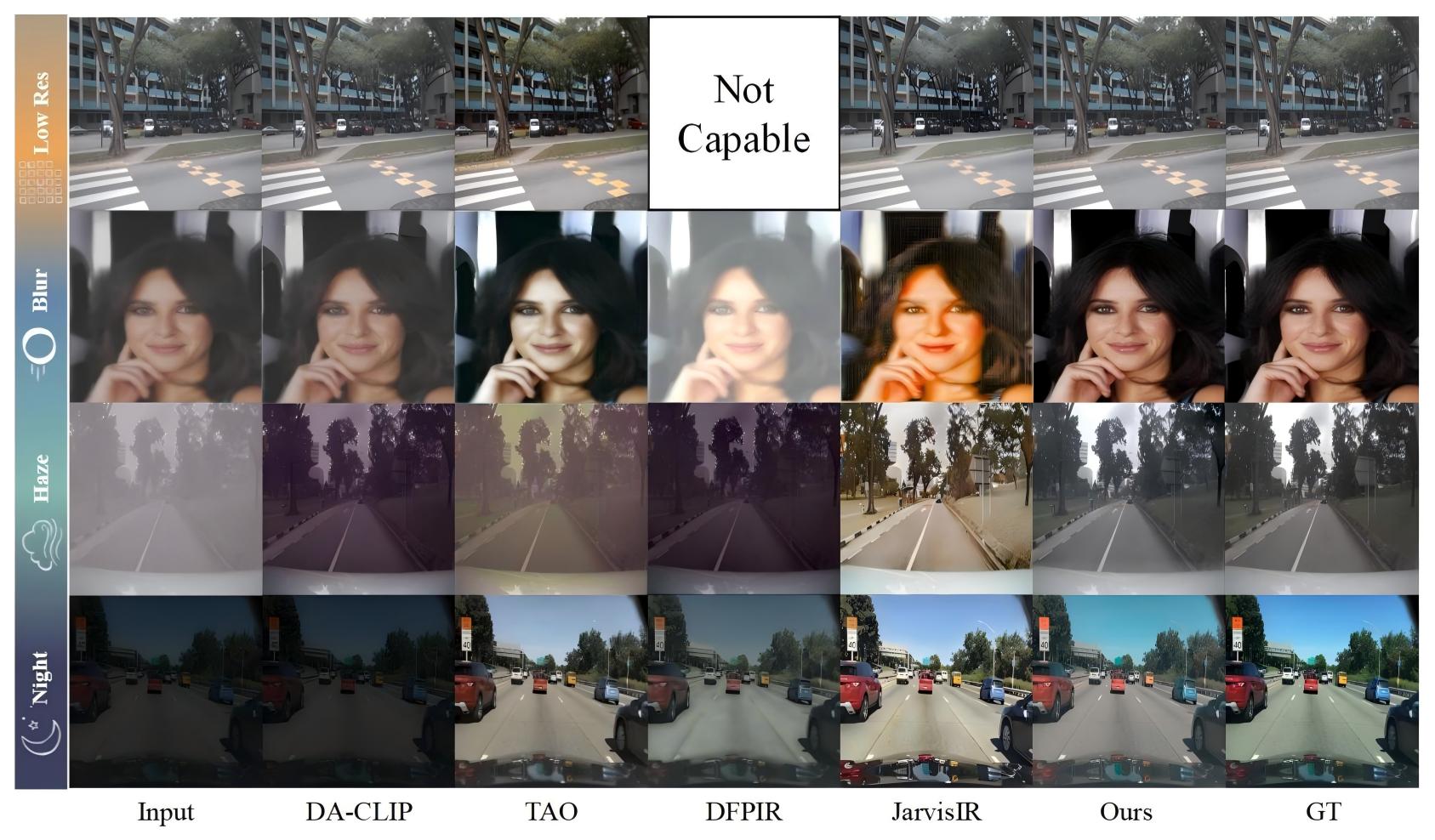}
\vspace{-0.2cm}
\caption{\textbf{Qualitative comparison of image restoration results.} Our approach (rightmost column) produces cleaner and more visually pleasing results compared to baselines, effectively removing complex degradations while preserving semantic details.}
\label{fig:restoration_qualitative}
\vspace{-0.4cm}
\end{figure*}

\begin{table}[t]
\centering
\caption{Quantitative comparison on the CleanBench Subset. Best results are highlighted in \textbf{bold}.}
\label{tab:restoration_main}
\resizebox{0.32\textwidth}{!}{
\begin{tabular}{lccc}
\toprule
\multirow{2}{*}{Method}  & \multicolumn{3}{c}{Avg} \\
\cmidrule(lr){2-4}
 & NIQE $\downarrow$ & BRISQUE $\downarrow$ & CLIP-IQA $\uparrow$ \\
\midrule
DA-CLIP & 5.193 & 15.101 & 0.299 \\
TAO  & \underline{4.966} & \underline{9.716} & 0.310 \\
Gendeg & 5.464 & 18.023 & 0.300 \\
JarvisIR  & 8.121 & 27.440 & \underline{0.443} \\
DFPIR  & 5.023 & 13.997 & 0.296 \\
\midrule
\textbf{Ours} & \textbf{4.683} & \textbf{8.888} & \textbf{0.458} \\
\bottomrule
\end{tabular}}
\vspace{-0.5cm} 
\end{table}

\begin{figure*}[htbp]
    \centering
    \begin{subfigure}[b]{0.24\textwidth}
        \centering
        \includegraphics[width=\textwidth]{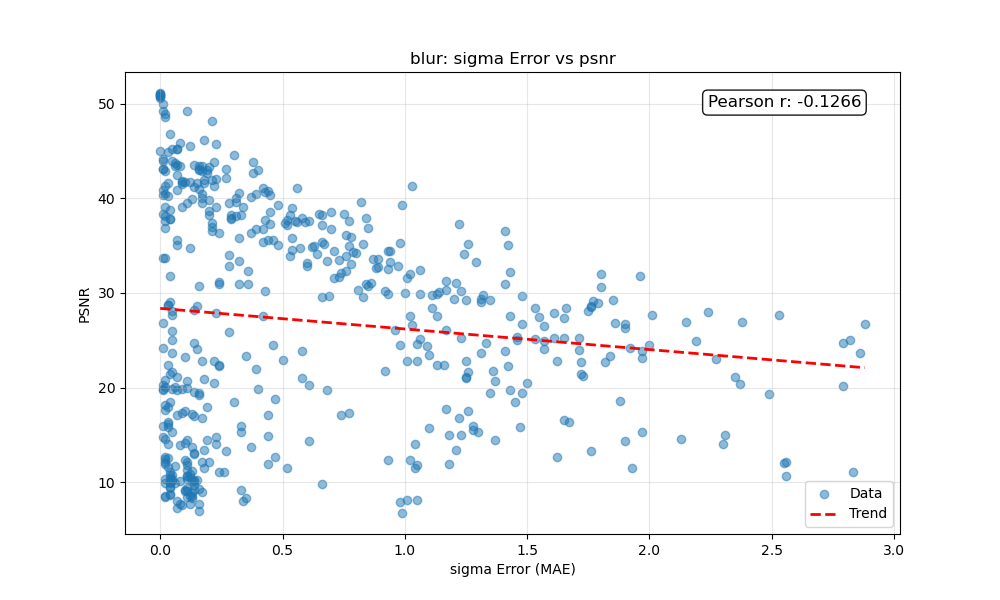}
        \caption{Blur: $\sigma$}
        \label{fig:blur}
    \end{subfigure}
    \hfill
    % 2. Low Res
    \begin{subfigure}[b]{0.24\textwidth}
        \centering
        \includegraphics[width=\textwidth]{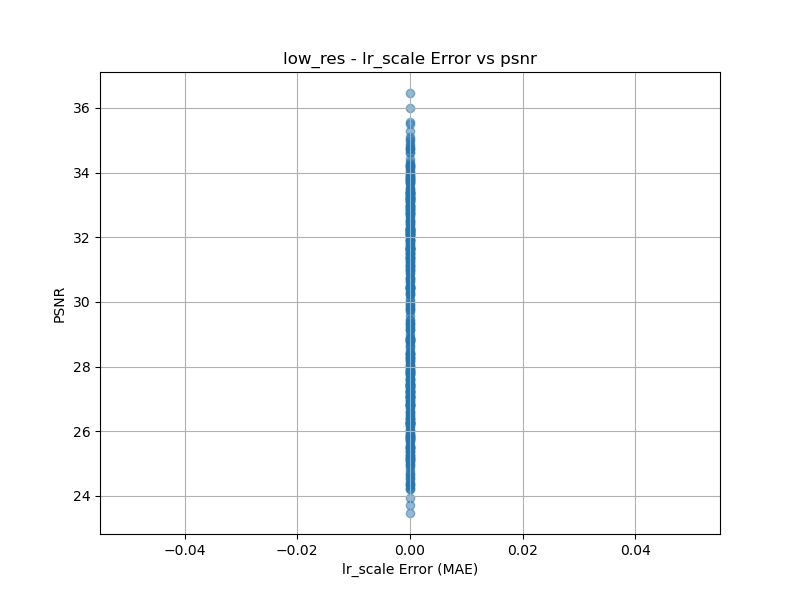}
        \caption{Low Resolution: Scale}
        \label{fig:low_res}
    \end{subfigure}
    \hfill
    % 3. Haze (A)
    \begin{subfigure}[b]{0.24\textwidth}
        \centering
        \includegraphics[width=\textwidth]{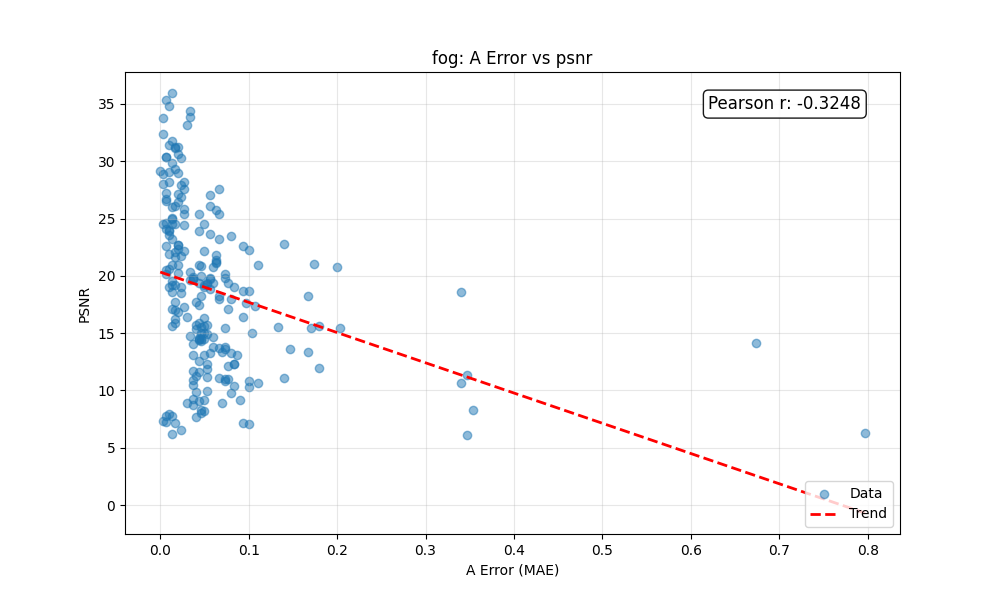}
        \caption{Haze: Atmospheric Light}
        \label{fig:haze_a}
    \end{subfigure}
    \hfill
    % 4. Haze (t)
    \begin{subfigure}[b]{0.24\textwidth}
        \centering
        \includegraphics[width=\textwidth]{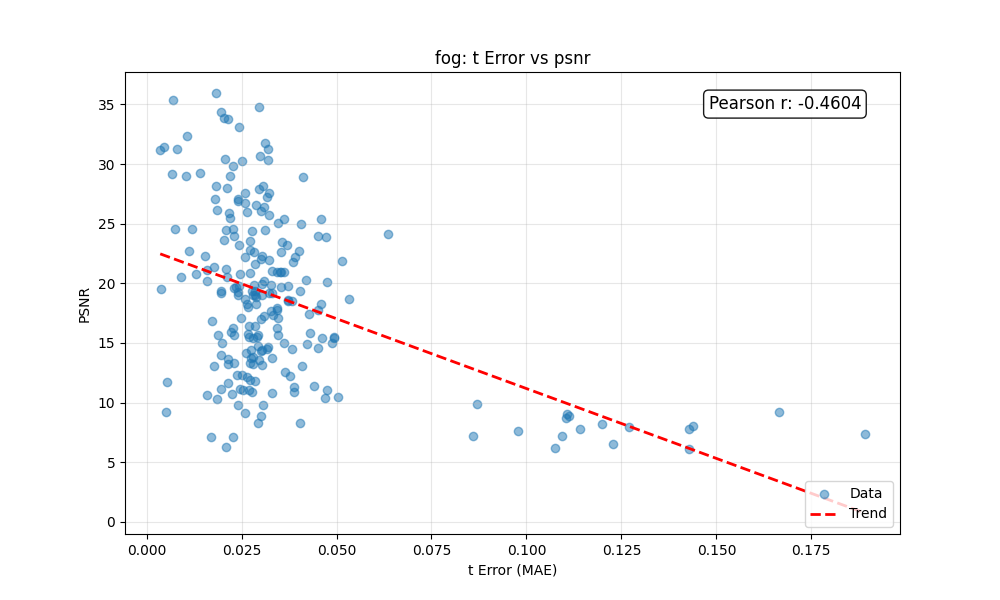}
        \caption{Haze: Transmission Map}
        \label{fig:haze_t}
    \end{subfigure}
    
    \vspace{1em} % 两行之间的垂直间距
    
    % --- 第二行：3张图，大小不变，居中排列 ---
    % 关键点：这里不用 \hfill，而是用固定间距 \hspace，保证它们靠在一起且整体居中
    
    % 1. Night Gain
    \begin{subfigure}[b]{0.24\textwidth}
        \centering
        \includegraphics[width=\textwidth]{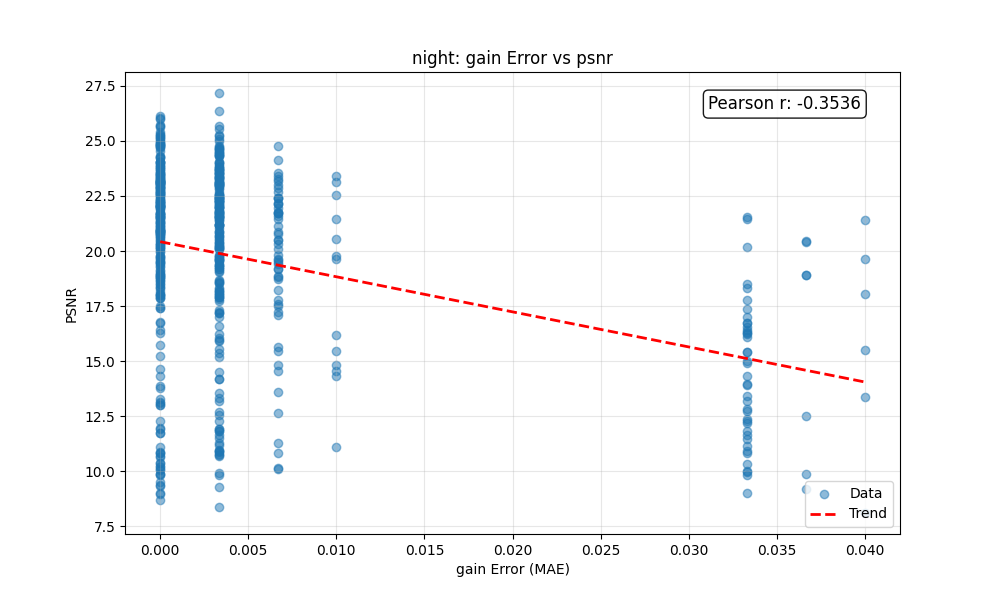}
        \caption{Night: Gain}
        \label{fig:night_gain}
    \end{subfigure}
    \hspace{0.5em} % 图片间距，可视情况微调，推荐 0.5em 到 1em
    % 2. Night Gamma
    \begin{subfigure}[b]{0.24\textwidth}
        \centering
        \includegraphics[width=\textwidth]{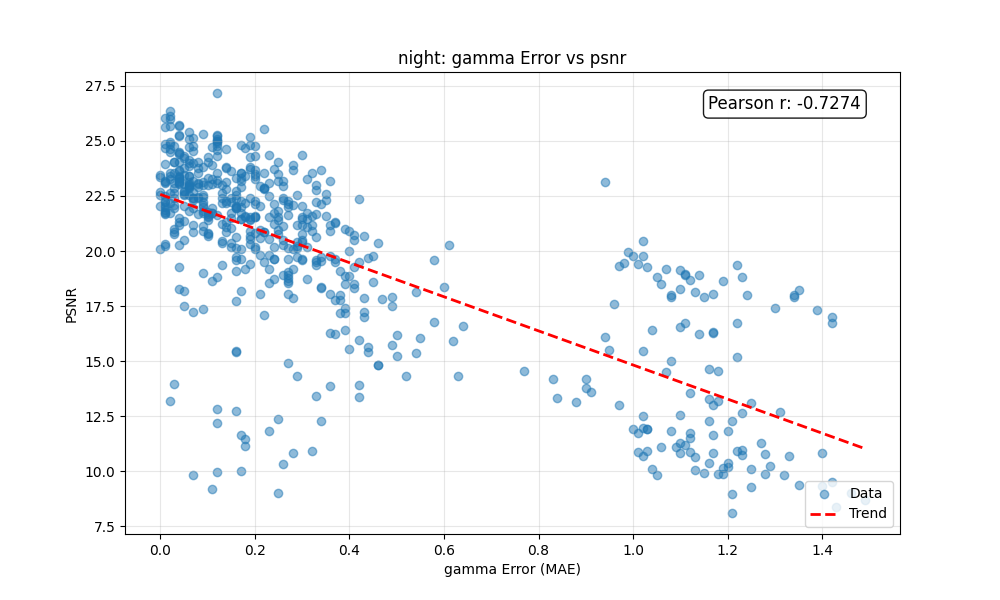}
        \caption{Night: $\gamma$ }
        \label{fig:night_gamma}
    \end{subfigure}
    \hspace{0.5em}
    % 3. Night XYZ
    \begin{subfigure}[b]{0.24\textwidth}
        \centering
        \includegraphics[width=\textwidth]{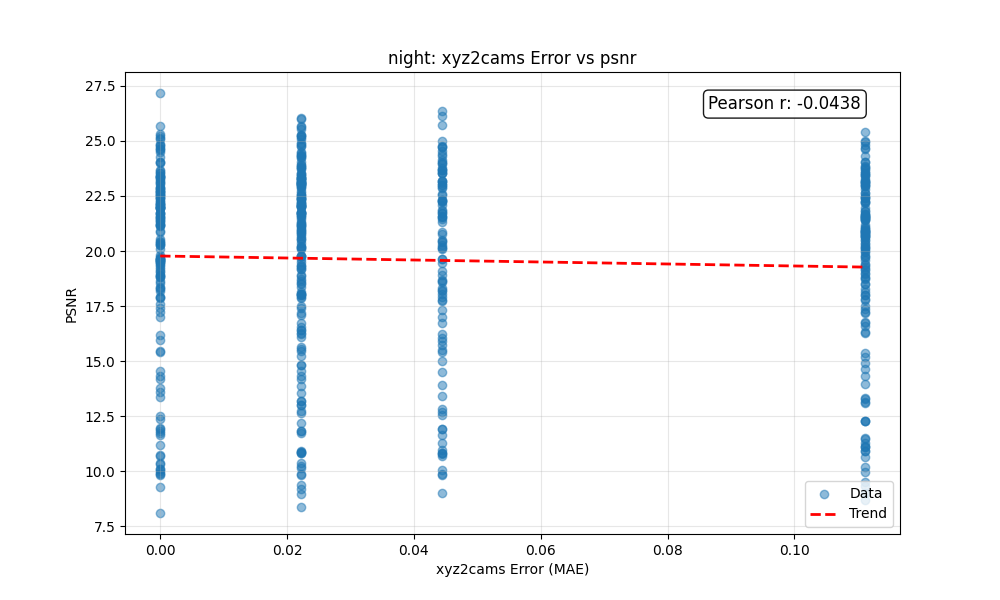}
        \caption{Night: Camera Intrinsic }
        \label{fig:night_xyz}
    \end{subfigure}
    
    \caption{PSNR analysis against parameter estimation errors. Top row: Blur, Low Resolution, and Haze parameters. Bottom row: Night scene parameters. All plots share the same scale for comparison.}
    \label{fig:parameter_errors}
    \vspace{-0.5cm}
\end{figure*}

\textbf{Parameter Value Regression.}
Table \ref{tab:main_results} evaluates the precision of physical parameter estimation, a prerequisite for controllable restoration. 
Off-the-shelf MLLMs (e.g., GPT-4o, Gemini-2.5) exhibit \textit{numerical hallucination}, indicating a fundamental inability to ground visual features into continuous physical quantities without domain-specific alignment. 
While Supervised Fine-Tuning (SFT) significantly bridges this gap—reducing Qwen3VL-8B's average error, but still struggles with fine-grained disentanglement.
In contrast, DU-VLM achieves lowest average P-Rel, validating the efficacy of our structured reasoning framework in capturing mechanistic fidelity.

\subsection{Image Restoration Performance}
\cref{tab:restoration_subtables} presents quantitative results across four distinct degradation types, while \cref{fig:restoration_qualitative} provides qualitative visual comparisons. Our method consistently outperforms existing universal restoration approaches, achieving the best quantitative performance and superior visual fidelity. Specifically, our restorations exhibit strong consistency with ground truth in both texture preservation and color accuracy, effectively eliminating degradations without introducing artifacts or over-smoothing.
Notably, on Night, Haze and  Low Resolution (Tables a, b, d), we achieve superior performance across all metrics.  
On Blur degradation (Table c), our method achieves a substantial PSNR improvement exceeding $8$ dB, which demonstrates the effectiveness of explicit kernel parameter estimation. The SSIM and LPIPS are marginally lower than the best competitors. This reflects the inherent sensitivity of spatially-varying blur to parametric inaccuracies. These results collectively demonstrate that physical parameter enables more precise restoration, particularly when the degradation follows well-defined optical models.

We also evaluate the generalization capability of our method on real-world data from CleanBench~\cite{lin2025jarvisir}, specifically selecting the night and haze subsets for validation. As quantitatively reported in \cref{tab:restoration_main}, our method exhibits strong zero-shot generalization to real-world scenarios, achieving competitive restoration quality without fine-tuning on target distributions.

\subsection{Tolerance to Parametric Errors}
\label{subsec:param_sensitivity}

To quantify how deviations in physical parameter estimation affect restoration fidelity, we analyze the sensitivity of the diffusion model to parametric inaccuracies across the four degradation types. Specifically, we compute the Pearson correlation coefficient $r$ between the absolute parameter estimation error and the resulting restoration quality (PSNR, SSIM, LPIPS) of the diffusion output conditioned on it. \cref{fig:parameter_errors} reports the correlation analysis for PSNR (SSIM and LPIPS results are provided in the Appendix). The results reveal distinct tolerance profiles across degradations: while all parametric errors negatively impact restoration quality, the diffusion model exhibits varying degrees of robustness.

% We attribute this transferability to two synergistic mechanisms. First, the framework promotes \textbf{mechanistic invariance}. By enforcing hierarchical physical constraints (Type $\rightarrow$ Key $\rightarrow$ Value), the model captures universal physical laws—such as atmospheric scattering models—rather than overfitting to synthetic texture biases. This allows for the robust decoupling of degradation parameters from scene semantics even in unseen domains. Second, the \textbf{test-time adaptation} via Online Self-Supervised RL (Sec.~\ref{subsec:rl_optimization}) actively bridges the synthetic-to-real gap. By leveraging no-reference quality feedback, the model dynamically calibrates its parameter estimation policy to the target distribution, significantly elevating zero-shot performance without the need for paired ground truth.
%小羊补充
\subsection{Ablation Studies}
\label{subsec:ablation}

\textbf{Ablation on Multimodal CoT. }
As shown in Table~\ref{tab:ablation_inputs}, removing Multimodal CoT components leads to consistent performance degradation. Specifically, the \textbf{w/o FFT} setting incurs a 2.48 dB drop in PSNR, confirming that frequency-domain information is crucial for identifying subtle artifacts (e.g., compression noise) imperceptible in the spatial domain. Similarly, the \textbf{w/o Edge} configuration degrades Joint Accuracy by 1.9\%, indicating that explicit structural cues are essential for distinguishing high-frequency degradations such as defocus blur. Finally, the \textbf{w/o CoT} variant exhibits significantly higher relative error, demonstrating that anchoring the prediction range via chain-of-thought reasoning substantially improves VLM-based regression accuracy.

% \textbf{Effectiveness of Structured Reinforcement Learning.}
% We further validate the proposed optimization strategy in Table \ref{tab:ablation_rl}. The \emph{SFT Only} baseline suffers from exposure bias, leading to suboptimal parameter precision. While applying \emph{Naive RL} (using a simple scalar reward based on restoration quality) improves the PSNR to 22.89 dB, it fails to significantly boost the physical parameter accuracy (J-Acc 95.2\%). This suggests the model may learn ``shortcuts'' to optimize the restoration metric without truly understanding the degradation physics. In contrast, our \emph{Structured RL}, which factorizes the reward into Type, Key, and Value consistencies, forces the model to adhere to the physical hierarchy. This results in the best performance across all metrics, demonstrating that mechanistic accuracy is a prerequisite for robust restoration.

% \begin{figure}[t]
%     \centering
%     \includegraphics[width=1.0\linewidth]{fig/reward.png}
%     \caption{\textbf{RL reward.} }
%     \label{fig:reward}
% \end{figure}
\textbf{Effectiveness of Structured Reinforcement Learning.}
We validate the proposed optimization strategy in Table \ref{tab:ablation_rl}. The \emph{SFT Only} baseline suffers from exposure bias, leading to suboptimal parameter precision and restoration quality. 
While applying \emph{Naive RL} (optimizing solely for the restoration reward $r_{\text{rec}}$) improves, it fails to boost physical parameter accuracy. This suggests the model learns shortcuts to optimize the perceptual metric without truly understanding the degradation physics. 
In contrast, our \emph{Structured RL}, which factorizes the reward into Type, Key, and Value consistencies, forces the model to adhere to the physical hierarchy. This results in the best performance across all metrics, demonstrating that mechanistic accuracy is a prerequisite for robust restoration.
% Figure \ref{fig:reward} further visualizes the training dynamics. The reward curve shows a steady ascent and stabilizes after approximately 100 steps, confirming that our optimization strategy effectively aligns the model's reasoning process with the compound reward objective.

\begin{table}[ht]
\centering
\caption{Ablation study on \textbf{Multimodel CoT}. We evaluate the impact of different input combinations.}
\label{tab:ablation_inputs}
\resizebox{0.9\linewidth}{!}{
\begin{tabular}{lccc}
\toprule
Configuration & J-Acc ($\uparrow$) & P-Rel ($\downarrow$) & PSNR ($\uparrow$) \\
\midrule
\textbf{Full Model (FFT + Edge + CoT)} & \textbf{99.1\%} & \textbf{0.105} & \textbf{24.32} \\
\midrule
w/o FFT Input (Edge + CoT)             & \underline{98.5\%}          & \underline{0.110}          & 21.84 \\
w/o Edge Input (FFT + CoT)             & 97.2\%          & 0.122          & 22.31 \\
w/o CoT (Direct Prediction)            & 91.4\%          & 0.207          & \underline{22.95} \\
\bottomrule
\end{tabular}
}
\vspace{-0.2cm}
\end{table}

\begin{table}[t]
\centering
\caption{\textbf{Ablation study on Reinforcement Learning strategies.} Comparison between Supervised Fine-Tuning (SFT) and our proposed SFT + Structured RL. We also report Naive RL (optimizing only restoration score) to highlight the importance of physical constraints.}
\label{tab:ablation_rl}
\resizebox{0.95\linewidth}{!}{
\begin{tabular}{lccc}
\toprule
Method & J-Acc ($\uparrow$) & P-Rel ($\downarrow$) & PSNR (dB) ($\uparrow$) \\
\midrule
SFT Only & \underline{96.5\%} & 0.139 & 22.15 \\
SFT + Naive RL & 95.2\% & \underline{0.125} & \underline{22.89} \\
\textbf{SFT + Structured RL (Ours)} & \textbf{99.1\%} & \textbf{0.105} & \textbf{24.32} \\
\bottomrule
\end{tabular}
}
\vspace{-0.4cm}
\end{table}

% \begin{figure}[t]
%     \centering
%     \includegraphics[width=1.0\linewidth]{fig/reward.png}
%     \caption{\textbf{Training convergence of the RL stage.} The solid line represents the mean reward, and the shaded area indicates the standard deviation. The reward stabilizes rapidly, confirming the effectiveness and stability of the structured optimization.}
%     \label{fig:reward}
%     \vspace{-0.3cm}
% \end{figure}

\begin{figure}[t]
    \centering
    \includegraphics[width=1.0\linewidth]{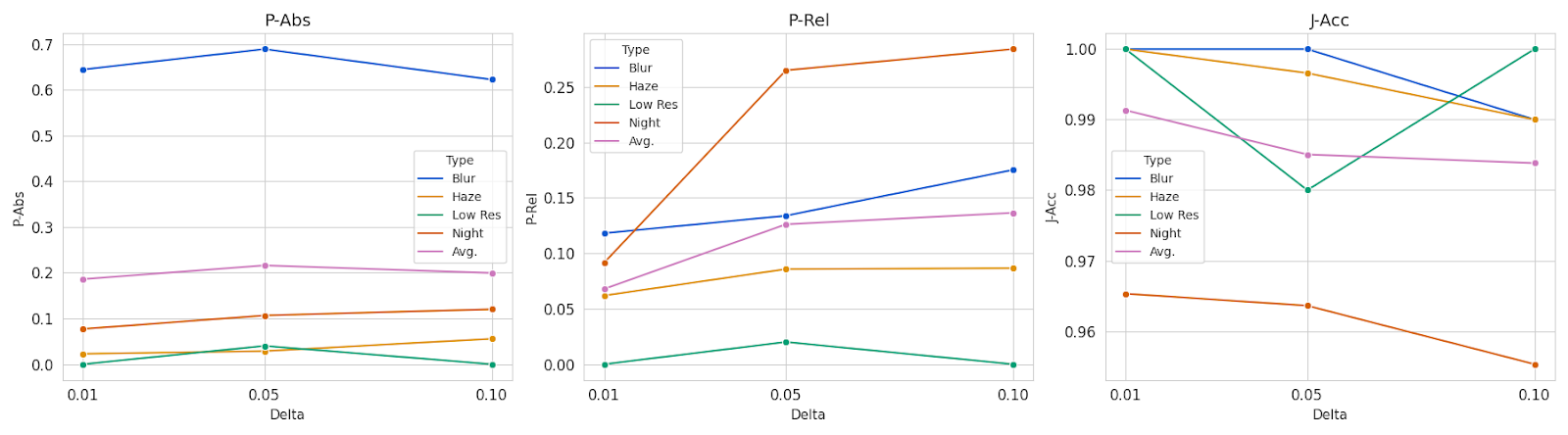}
    \caption{\textbf{Ablation study on quantization grid $\Delta$}. We evaluate the model's performance across different bin ($\Delta \in \{0.01, 0.05, 0.1\}$) in terms of parameter estimation errors: P-Abs, P-Rel and J-Acc.}
    \label{fig:delta_ablation}
    \vspace{-0.5cm}
\end{figure}
%%%%%%%%%%%%%%%%%%%%%%%%%%%%%%%%%%%%%%%%%%
\textbf{Ablation on quantization grid $\Delta$.}
Figure~\ref{fig:delta_ablation} examines the effect of quantization granularity. Coarse grids ($\Delta \in \{0.1, 0.05\}$) restrict resolution, resulting in elevated errors (P-Rel $> 0.12$). Finer quantization ($\Delta=0.01$) significantly improves precision, minimizing both P-Rel and P-Abs metrics. These results corroborate the theoretical guarantees established in Proposition \ref{pro:equivalence} and Proposition \ref{pro:bounds}.

\subsection{Limitations and Future Work}
\label{sec:limitations_future}
\textbf{\textit{Which Direction Should We Move Towards?}}

\textit{Scalability of Degradation Modeling.} 
While DU-110k provides high-quality annotations, real-world degradations require more diverse, hybrid physical formulations. Future work should extend to composite degradations and enable automated model selection, allowing the VLM to dynamically compose physical models for unseen patterns.

\textit{Bridging the Sim-to-Real Gap.}
Two aspects warrant investigation: (1) Developing more accurate physical models that better approximate real-world imaging physics beyond current simplifications; (2) Recording actual imaging pipeline parameters (\eg, camera ISP settings, atmospheric conditions) during real-world acquisition to construct authentic clean-degraded pairs with grounded physical annotations.

\textit{Towards End-to-End Joint Optimization.}
Our decoupled approach trains VLM for frozen diffusion backbones. Future work should investigate joint optimization via differentiable diffusion sampling or surrogate gradients, enabling simultaneous fine-tuning of both modules.

\section{Conclusion}
\label{sec:conclusion}
We introduced DU-VLM, a framework establishing a paradigm shift from qualitative degradation description to \textit{\textbf{parametric physical understanding}}. By formulating degradation modeling as a hierarchical structured prediction problem, we theoretically unified classification and value regression within a single autoregressive objective. Supported by our proposed DU-110k benchmark, we demonstrated that explicit physical reasoning enables VLMs to perform effective zero-shot steering of diffusion-based restoration. This work bridges the gap between semantic reasoning and low-level physics, offering a principled path toward interpretable and controllable vision systems.

% Our approach goes beyond improved metrics; by integrating Multimodal Chain-of-Thought reasoning with Structured Reinforcement Learning, we effectively bridged the gap between semantic visual representations and continuous physical quantities. Crucially, we showed that these interpretable parameters serve as powerful, zero-shot prompts for guiding pre-trained diffusion models, achieving state-of-the-art restoration without task-specific tuning. We believe this work establishes a principled foundation for \textbf{physics-aware vision systems}, paving the way for interpretable AI that not only perceives visual artifacts but fundamentally understands their underlying causes.

% \begin{figure}[h]
% \centering
% \includegraphics[width=\linewidth]{figures/generalization.png}
% \caption{Generalization performance on unseen degradation types. Our approach maintains strong performance even for degradation types not seen during training.}
% \label{fig:generalization}
% \end{figure}

% \subsection{Ablation Studies}
% \begin{table}[h]
% \centering
% \caption{Ablation study on key components. CoT: Chain-of-Thought, SR: Structured Reward.}
% \label{tab:ablation}
% \begin{tabular}{lccc}
% \toprule
% Configuration & PSNR & SSIM & Param. Acc. \\
% \midrule
% w/o CoT & 21.34 & 0.871 & 82.1\% \\
% w/o Structured Reward & 22.01 & 0.882 & 84.5\% \\
% w/o Physics Constraint & 22.56 & 0.889 & 85.2\% \\
% \midrule
% \textbf{Full Model} & \textbf{23.67} & \textbf{0.903} & \textbf{87.3\%} \\
% \bottomrule
% \end{tabular}
% \end{table}

\section*{Broader Impact Statement}

This work advances parametric image degradation understanding and restoration through Vision-Language Models. While our research aims to improve image quality assessment and restoration fidelity, we acknowledge several societal implications that warrant discussion.

\textbf{Positive Implications.} Our approach enables principled, physics-aware image restoration that could benefit healthcare (medical imaging enhancement), cultural heritage (historical photo restoration), and and public safety (degraded surveillance footage recovery). By grounding restoration in explicit physical parameters rather than black-box transformations, DU-VLM offers greater transparency and controllability, potentially reducing arbitrary distortions in critical applications. The release of DU-110k provides a standardized benchmark for the research community to rigorously evaluate degradation understanding, potentially advancing low-light photography, atmospheric dehazing, and lens aberration correction for consumer and professional imaging.

\textbf{Ethical Considerations and Risks.} As with any powerful image restoration technology, DU-VLM carries risks of misuse. High-fidelity restoration capabilities could be exploited to:
\begin{itemize}
    \item Conceal manipulation traces in forensic contexts (e.g., masking splicing artifacts by attributing them to benign degradations);
    \item Circumvent privacy protections by enhancing low-quality surveillance or unintended background captures in public spaces;
    \item Generate synthetic training data for deepfakes with realistic degradation profiles to evade detection systems.
\end{itemize}

\textbf{Mitigation and Responsible Use.} We emphasize that DU-VLM is intended for legitimate image enhancement and analysis tasks where the original content ownership is clear. We encourage the research community to develop forensic techniques alongside restoration methods to detect parametric tampering. Furthermore, while our work improves restoration quality, it does not create novel imagery from scratch; rather, it recovers underlying information. We advocate for clear guidelines regarding consent when enhancing images of individuals and transparency regarding AI-based restoration in journalistic or legal contexts.

%% file: sec/X_appendix.tex
\newpage
\appendix
\onecolumn
\section{Proofs of Theoretical Analysis}
\label{app:proofs}

In this section, we provide detailed derivations for Proposition \ref{pro:equivalence} and Proposition \ref{pro:bounds} presented in the main text.

\subsection{Setup and Notation}

Recall the structured output definition $\hat{\mathcal{D}} = \{(\hat{t}, \hat{k}, \hat{v})\}$. We define the joint distribution factorization as:
\begin{equation}
    p_\theta(t, k, v \mid \bm{x}) = p_\theta(t \mid \bm{x}) \cdot p_\theta(k \mid t, \bm{x}) \cdot p_\theta(v \mid t, k, \bm{x}).
\end{equation}
Let the continuous value $v \in \mathbb{R}^d$ be bounded within a domain $\mathcal{V}$. We employ a quantization scheme $\mathcal{Q}: \mathcal{V} \to \{1, \dots, Z\}$ that maps a continuous vector $v$ to a discrete bin index $z$. Let $\mathcal{B}_z$ denote the region of bin $z$ with uniform volume (Lebesgue measure) $\text{Vol}(\mathcal{B}_z) = \Delta$. The discrete target for the NTP model corresponds to the tuple $(t, k, z)$.

\subsection{Proof of Proposition \ref{pro:equivalence}}

\textbf{Proposition \ref{pro:equivalence}} \textit{Under the assumptions of local Gaussianity and fine quantization, $\mathcal{L}_{\text{NTP}}$ decomposes into structure classification and value regression losses.}

\begin{proof}
The Negative Log-Likelihood (NLL) for the Next-Token Prediction objective is given by:
\begin{equation}
    \mathcal{L}_{\text{NTP}}(\theta) = -\log P_\theta(t, k, z \mid \bm{x}).
\end{equation}
Using the chain rule of probability, we expand this term:
\begin{align}
    \mathcal{L}_{\text{NTP}}(\theta) &= -\log \left[ P_\theta(t \mid \bm{x}) P_\theta(k \mid t, \bm{x}) P_\theta(z \mid t, k, \bm{x}) \right] \\
    &= \underbrace{-\log P_\theta(t, k \mid \bm{x})}_{\mathcal{L}_{\text{struct}}} \underbrace{-\log P_\theta(z \mid t, k, \bm{x})}_{\mathcal{L}_{\text{val}}}.
\end{align}
The first term $\mathcal{L}_{\text{struct}}$ corresponds exactly to the cross-entropy loss for classification of the structure components ($t, k$). We now analyze the second term $\mathcal{L}_{\text{val}}$.

By the Mean Value Theorem for integrals, the probability mass of the discrete bin $z$ is related to the continuous probability density $p_\theta(v \mid t, k, \bm{x})$ as follows:
\begin{equation}
    P_\theta(z \mid t, k, \bm{x}) = \int_{u \in \mathcal{B}_z} p_\theta(u \mid t, k, \bm{x}) \, du \approx p_\theta(v \mid t, k, \bm{x}) \cdot \Delta,
\end{equation}
where $v$ is a representative point (e.g., the center) of bin $\mathcal{B}_z$. This approximation becomes exact as $\Delta \to 0$.

Substituting this into $\mathcal{L}_{\text{val}}$:
\begin{equation}
    \mathcal{L}_{\text{val}} \approx -\log \left( p_\theta(v \mid t, k, \bm{x}) \cdot \Delta \right) = -\log p_\theta(v \mid t, k, \bm{x}) - \log \Delta.
\end{equation}
We invoke the assumption that the local conditional density follows a Gaussian distribution $\mathcal{N}(\mu_\theta(\cdot), \sigma^2 \mathbf{I})$:
\begin{equation}
    p_\theta(v \mid t, k, \bm{x}) = \frac{1}{(2\pi\sigma^2)^{d/2}} \exp \left( -\frac{\|v - \mu_\theta(t, k, \bm{x})\|^2}{2\sigma^2} \right).
\end{equation}
Taking the negative log:
\begin{equation}
    -\log p_\theta(v \mid t, k, \bm{x}) = \frac{1}{2\sigma^2} \|v - \mu_\theta(t, k, \bm{x})\|^2 + \frac{d}{2} \log(2\pi\sigma^2).
\end{equation}
Combining these results, the total loss becomes:
\begin{equation}
    \mathcal{L}_{\text{NTP}} \approx \mathcal{L}_{\text{struct}} + \frac{1}{2\sigma^2} \|v - \mu_\theta\|^2 + \underbrace{\frac{d}{2} \log(2\pi\sigma^2) - \log \Delta}_{C}.
\end{equation}
Since $\Delta$ and $\sigma$ are fixed hyperparameters during optimization, $C$ is a constant. Thus, minimizing $\mathcal{L}_{\text{NTP}}$ is equivalent to minimizing the sum of classification and weighted regression losses.
\end{proof}

\subsection{Proof of Proposition \ref{pro:bounds}}

\textbf{Proposition \ref{pro:bounds}: } \textit{Given an excess risk $\epsilon$, the classification error $R_{\text{cls}}$ and regression MSE $R_{\text{reg}}$ are bounded by $\sqrt{2\epsilon}$ and $\Delta^2/4 + D_{\max}^2\sqrt{2\epsilon}$, respectively.}

\begin{proof}
Let $P^*$ denote the true distribution and $P_\theta$ denote the model distribution. The excess risk condition is given by:
\begin{equation}
    \mathbb{E}_{\bm{x}} \left[ D_{\text{KL}}(P^*(\cdot \mid \bm{x}) \,\|\, P_\theta(\cdot \mid \bm{x})) \right] \leq \epsilon.
\end{equation}

\textbf{Part 1: Classification Error Bound}

The classification error corresponds to the event where the predicted structure tuple $S = (t, k)$ does not match the ground truth $S^*$. The Total Variation (TV) distance relates to the probability of this event. By Pinsker's Inequality, for any input $\bm{x}$:
\begin{equation}
    \delta_{\text{TV}}(P^*, P_\theta) \leq \sqrt{\frac{1}{2} D_{\text{KL}}(P^* \,\|\, P_\theta)}.
\end{equation}
A well-known corollary relates the $L_1$ distance to KL divergence: $\|P^* - P_\theta\|_1 \leq \sqrt{2 D_{\text{KL}}(P^* \,\|\, P_\theta)}$.
The misclassification rate $R_{\text{cls}}$ is bounded by the $L_1$ distance between the true and predicted distributions. Using Jensen's inequality ($\mathbb{E}[\sqrt{X}] \leq \sqrt{\mathbb{E}[X]}$):
\begin{equation}
    R_{\text{cls}} \leq \mathbb{E}_{\bm{x}} \left[ \sqrt{2 D_{\text{KL}}} \right] \leq \sqrt{2 \mathbb{E}_{\bm{x}} [D_{\text{KL}}]} \leq \sqrt{2\epsilon}.
\end{equation}

\textbf{Part 2: Regression Error Bound}

Let $\hat{v}$ be the centroid of the predicted bin $\hat{z}$. The regression Mean Squared Error (MSE) is:
\begin{equation}
    R_{\text{reg}} = \mathbb{E} [\|v - \hat{v}\|^2].
\end{equation}
We decompose this expectation based on whether the structure classification (and consequently the coarse bin selection) is correct. Let $E$ be the event that the correct bin is selected (i.e., $\hat{z} = z^*$).
\begin{equation}
    \mathbb{E} [\|v - \hat{v}\|^2] = P(E) \cdot \mathbb{E} [\|v - \hat{v}\|^2 \mid E] + P(E^c) \cdot \mathbb{E} [\|v - \hat{v}\|^2 \mid E^c].
\end{equation}

\textit{Case 1: Correct Bin ($E$)}. If the correct bin is selected, both $v$ and $\hat{v}$ lie within $\mathcal{B}_{z^*}$. Since the maximum diagonal length of the bin is $\Delta$, and $\hat{v}$ is the centroid, the maximum distance is $\Delta/2$. Thus:
\begin{equation}
    \mathbb{E} [\|v - \hat{v}\|^2 \mid E] \leq \left(\frac{\Delta}{2}\right)^2 = \frac{\Delta^2}{4}.
\end{equation}

\textit{Case 2: Incorrect Bin ($E^c$)}. If the bin is incorrect, the error is bounded by the diameter of the value space $\mathcal{V}$, denoted $D_{\max}$:
\begin{equation}
    \mathbb{E} [\|v - \hat{v}\|^2 \mid E^c] \leq D_{\max}^2.
\end{equation}

Combining these, and noting that $P(E) \leq 1$ and $P(E^c) = R_{\text{cls}} \leq \sqrt{2\epsilon}$:
\begin{align}
    R_{\text{reg}} &\leq 1 \cdot \frac{\Delta^2}{4} + \sqrt{2\epsilon} \cdot D_{\max}^2 \\
    &= \frac{\Delta^2}{4} + D_{\max}^2 \sqrt{2\epsilon}.
\end{align}
This concludes the proof.
\end{proof}

\section{Detailed Error Analysis of Generalist MLLMs}
\label{sec:appendix_error_analysis}

To further investigate the limitations of generalist Multimodal Large Language Models (MLLMs) in degradation understanding, we present the confusion matrices for five representative models: GPT-4o, Gemini-2.5, Qwen-VL (8B/32B), and LLaVA-1.5. These experiments were conducted on the \textbf{DU-110k} test set under a zero-shot setting. The visualization results, shown in Figure \ref{fig:appendix_confusion}, reveal three distinct failure modes that underscore the necessity of our proposed DU-VLM.

\begin{figure*}[t]
    \centering
    % Row 1: 3 images
    \begin{subfigure}[b]{0.33\textwidth}
        \centering
        \includegraphics[width=\linewidth]{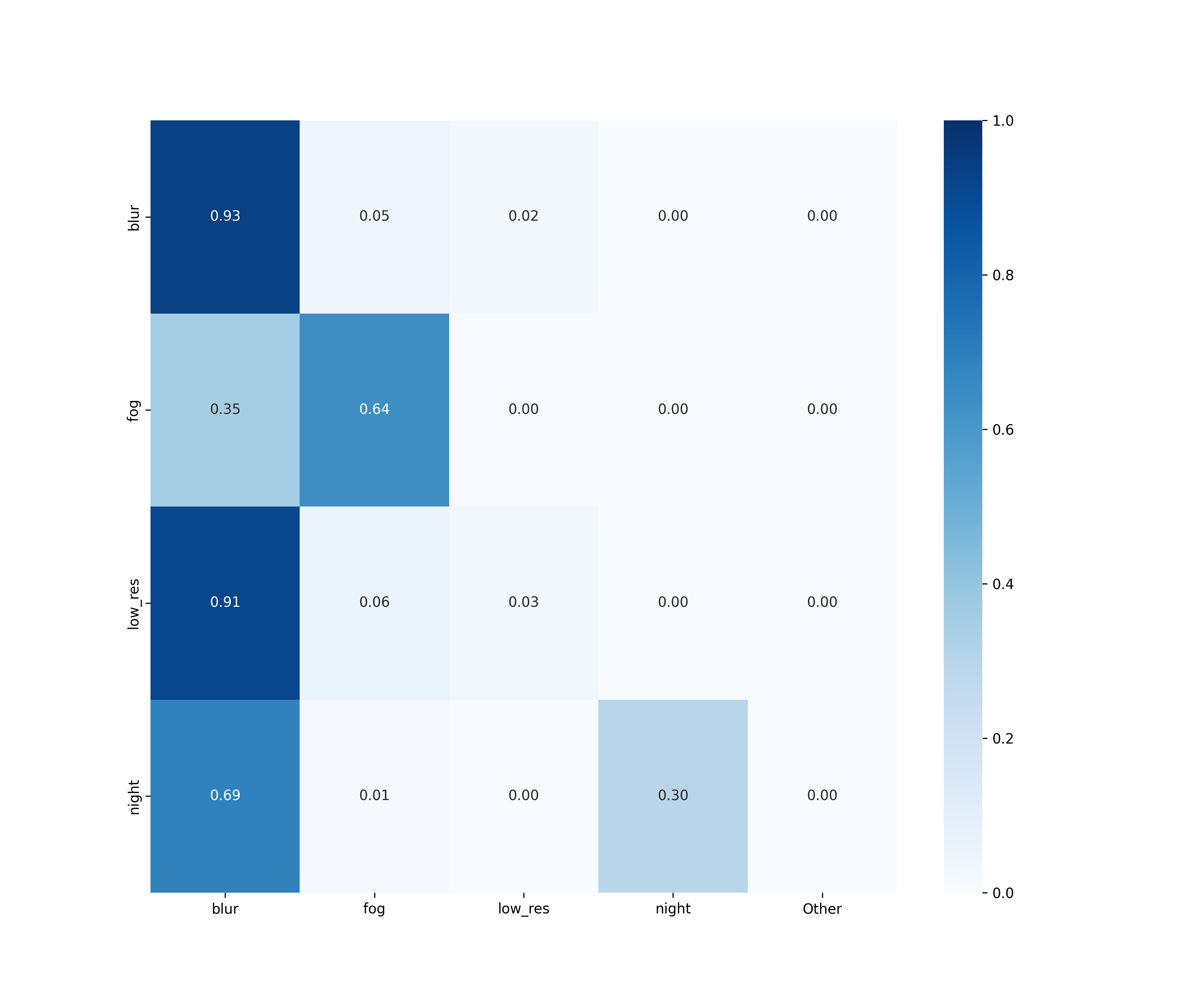}
        \caption{GPT-4o}
        \label{fig:cm_gpt4o}
    \end{subfigure}
    \hfill
    \begin{subfigure}[b]{0.33\textwidth}
        \centering
        \includegraphics[width=\linewidth]{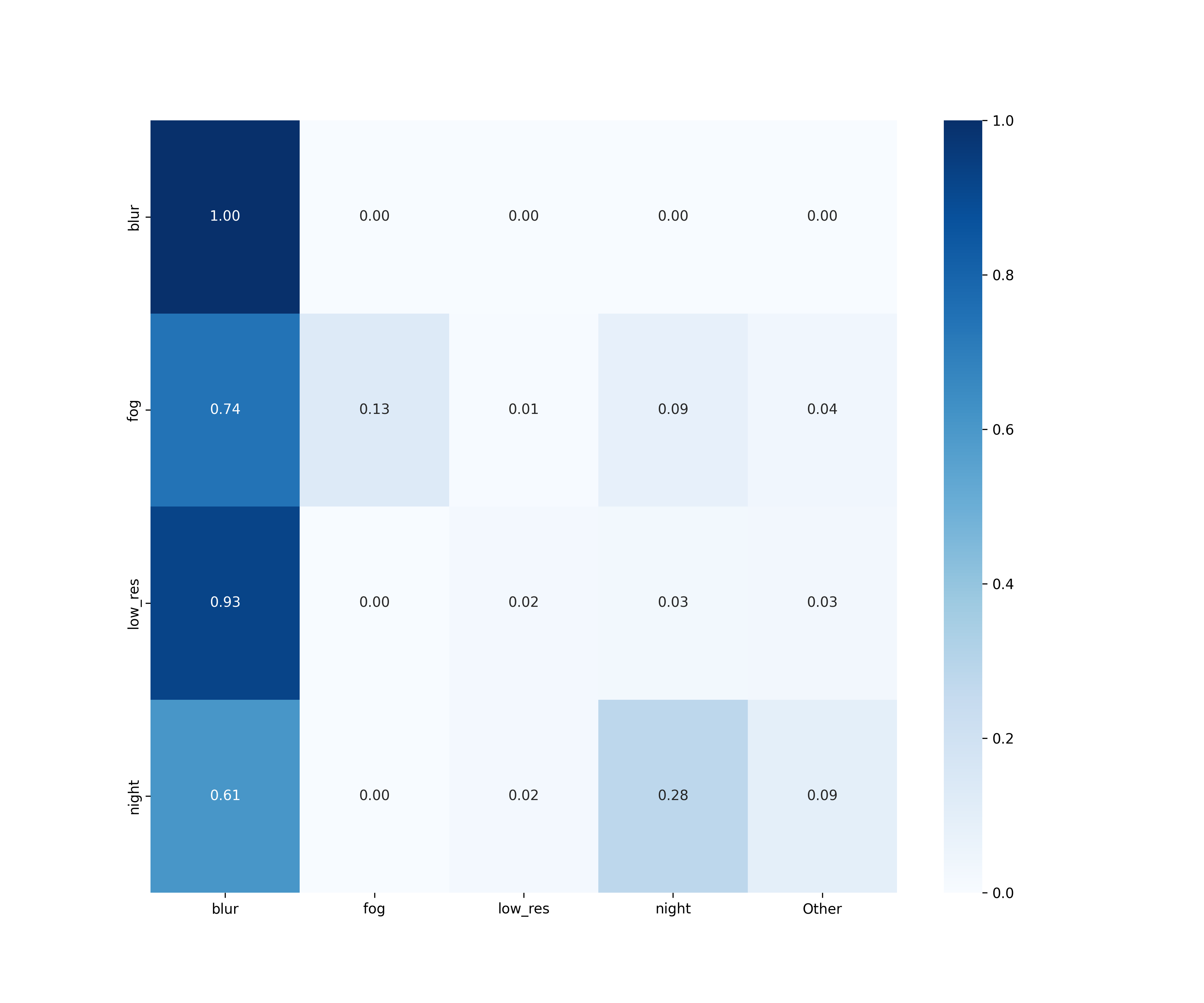}
        \caption{Qwen-VL-8B}
        \label{fig:cm_qwen8b}
    \end{subfigure}
    \hfill
    \begin{subfigure}[b]{0.33\textwidth}
        \centering
        \includegraphics[width=\linewidth]{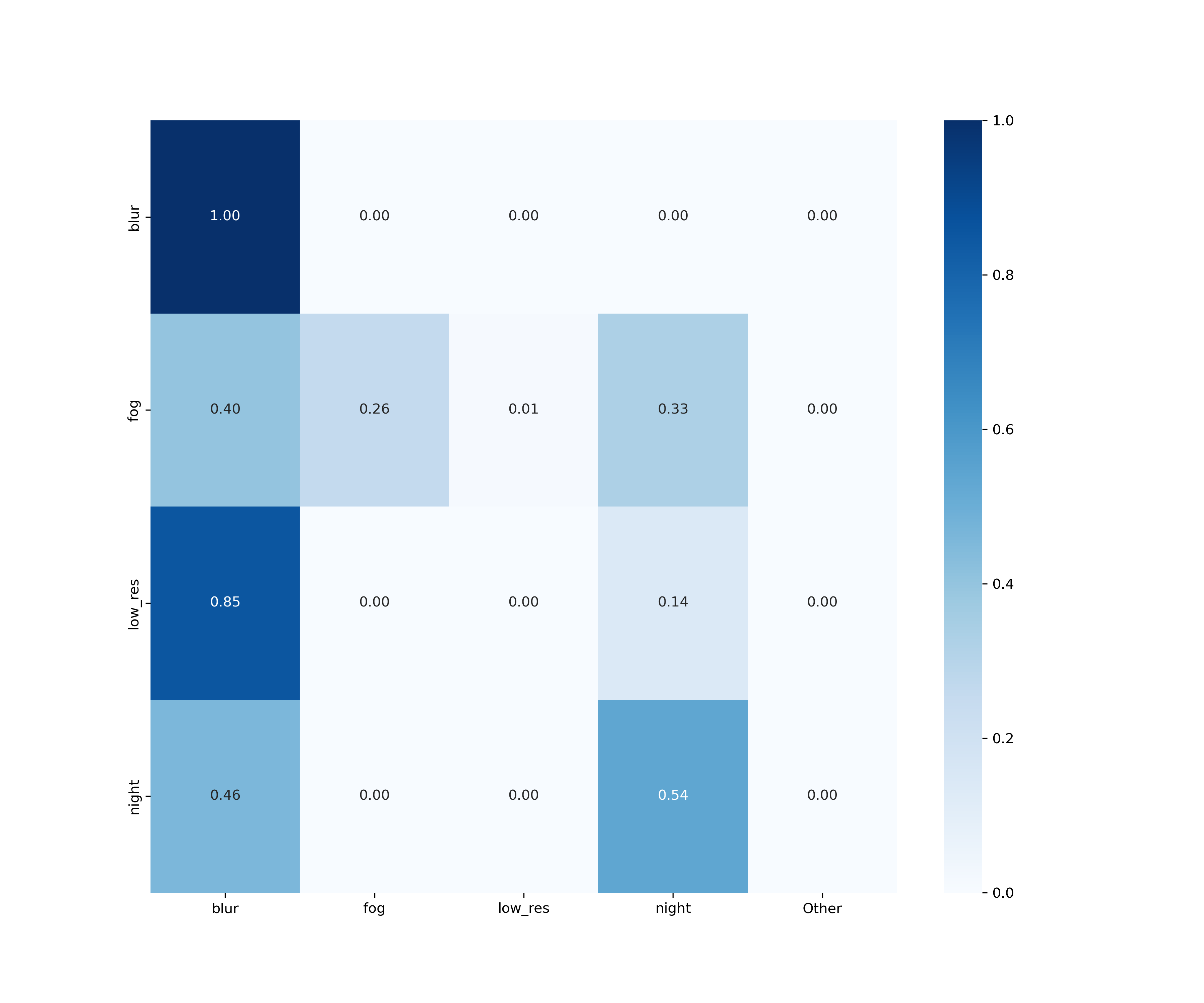}
        \caption{Qwen-VL-32B}
        \label{fig:cm_qwen32b}
    \end{subfigure}
    
    \vspace{0.3cm} % Spacing between rows
    
    % Row 2: 2 images, centered with spacing
    \begin{subfigure}[b]{0.33\textwidth}
        \centering
        \includegraphics[width=\linewidth]{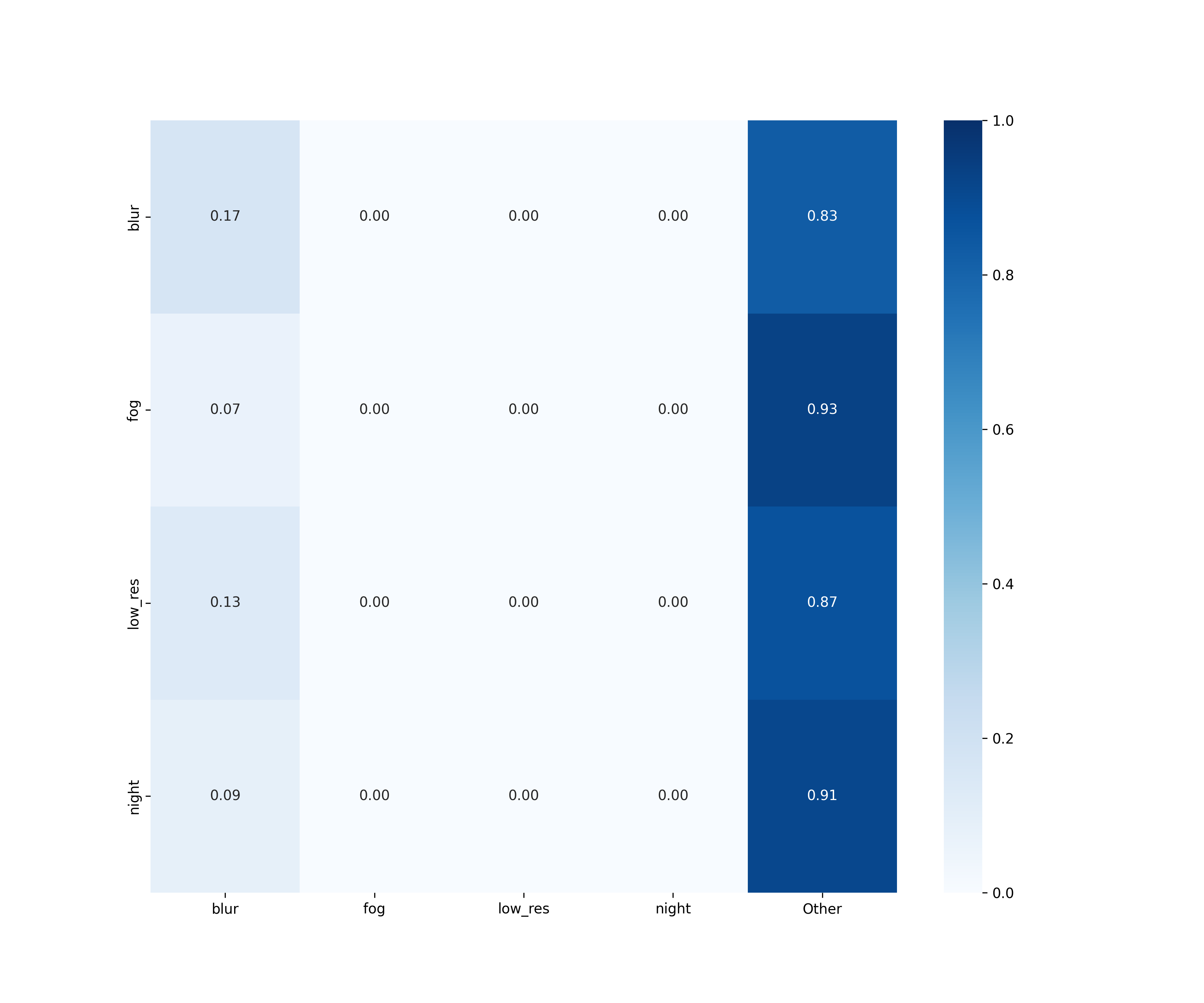}
        \caption{LLaVA-1.5-7B}
        \label{fig:cm_llava}
    \end{subfigure}
    \hfill
    \begin{subfigure}[b]{0.33\textwidth}
        \centering
        \includegraphics[width=\linewidth]{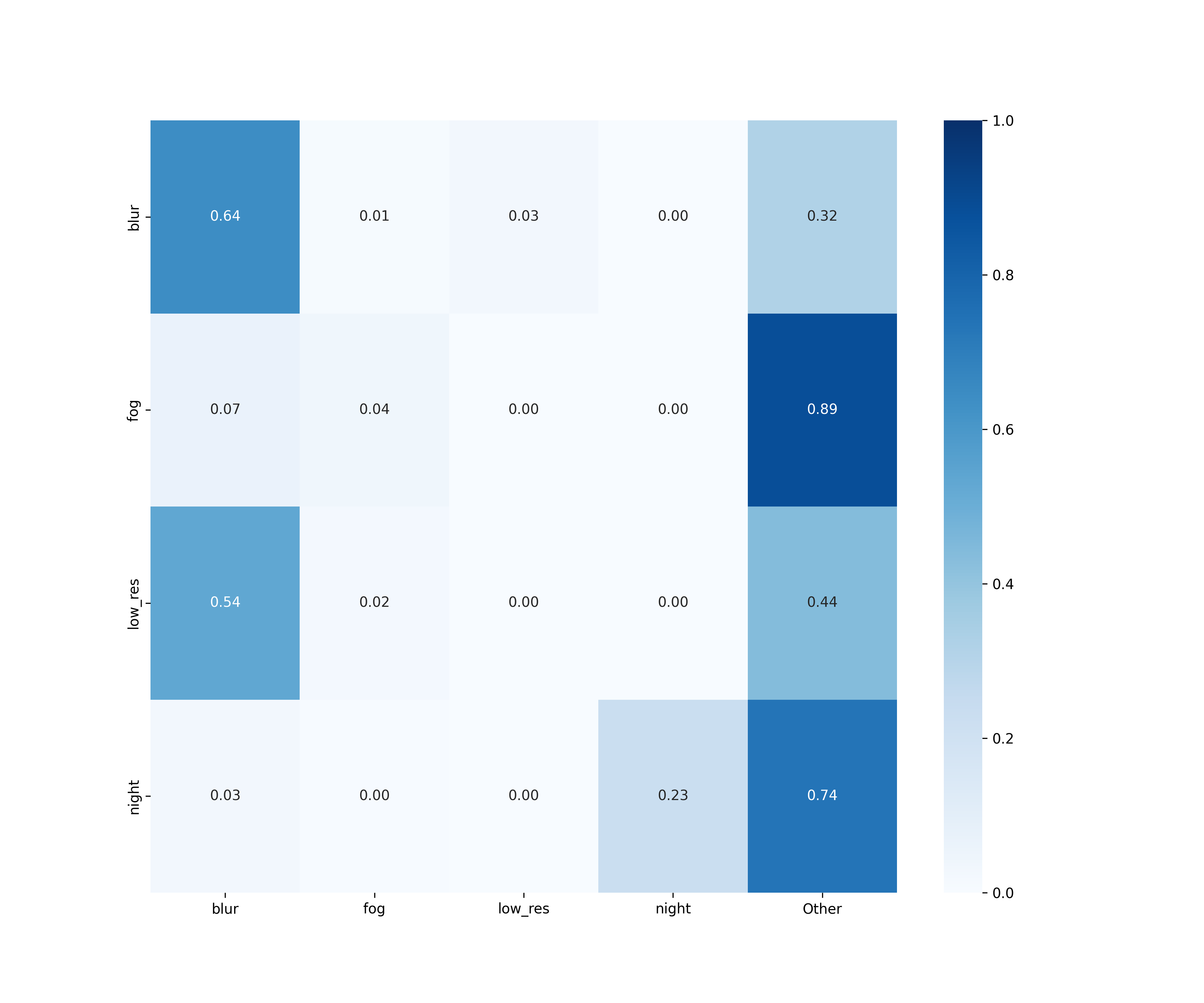}
        \caption{Gemini-2.5}
        \label{fig:cm_gemini}
    \end{subfigure}
    \hfill
    \begin{subfigure}[b]{0.33\textwidth}
        \centering
        \includegraphics[width=\linewidth]{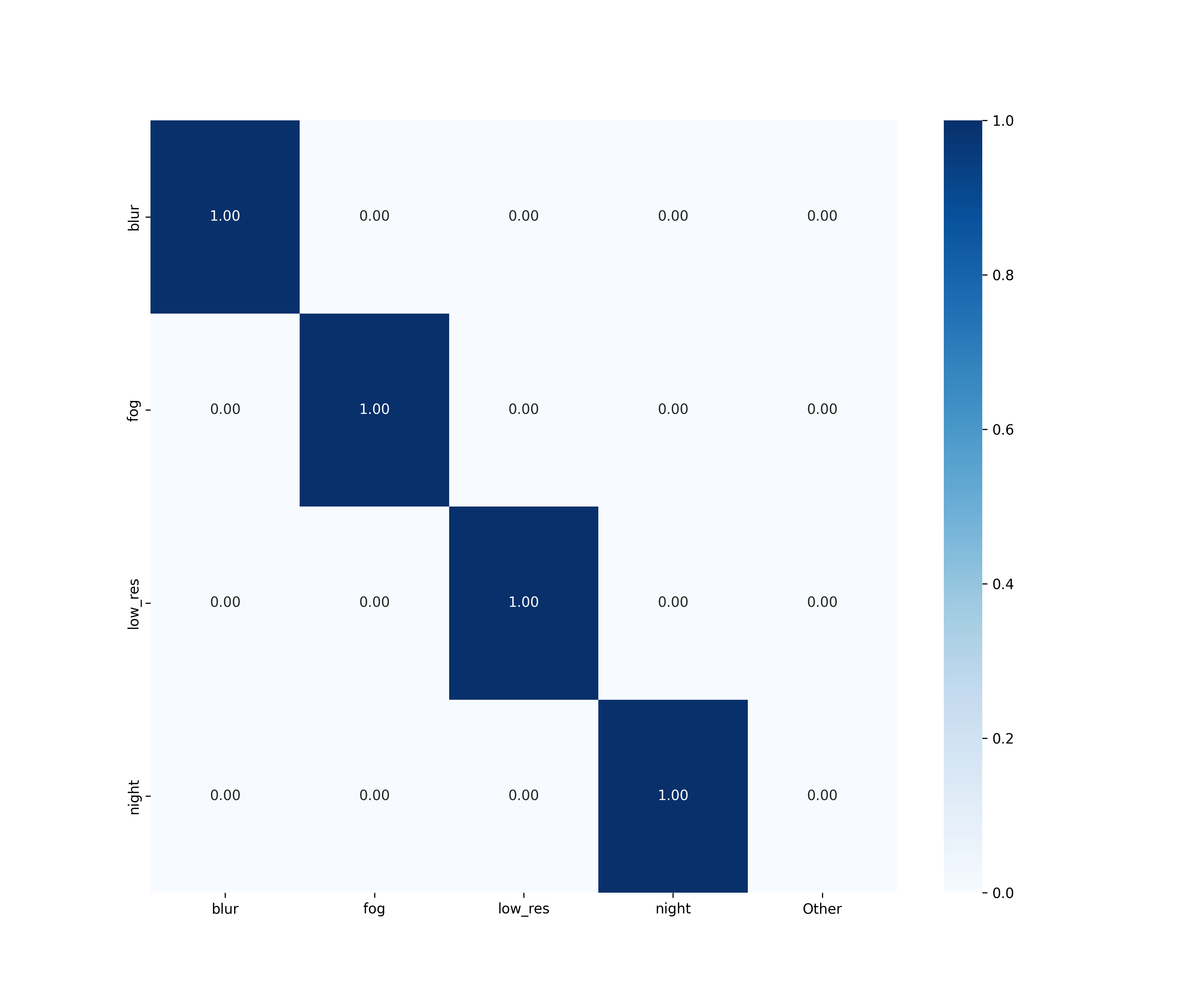}
        \caption{DU-VLM}
        \label{fig:cm_du}
    \end{subfigure}

    \caption{\textbf{Confusion Matrices of Generalist MLLMs under Zero-shot Setting.} The analysis reveals severe semantic biases. (a-c) Models exhibit a strong \textit{``Blur'' Bias}, classifying distinct degradations (e.g., Low-Resolution, Fog) as generic Blur. (d) LLaVA suffers from \textit{Mode Collapse}, predominantly predicting ``Color'' for all inputs. (e) Gemini shows \textit{Taxonomy Misalignment}, frequently categorizing valid physical degradations as ``Other''.}
    \label{fig:appendix_confusion}
\end{figure*}

\paragraph{1. Semantic Over-simplification (The ``Blur'' Bias).}
As illustrated in Figure \ref{fig:cm_gpt4o}, \ref{fig:cm_qwen8b}, and \ref{fig:cm_qwen32b}, highly capable models such as GPT-4o and the Qwen-VL series exhibit a dominant bias towards the \textit{Blur} category.
\begin{itemize}
    \item \textbf{GPT-4o} correctly identifies Blur (556 samples) but misclassifies 73\% of \textit{Color} samples (587/800 estimated) and 77\% of \textit{Low-Resolution} samples (430/555 estimated) as Blur.
    \item \textbf{Qwen-VL-8B} displays an even more severe bias, collapsing nearly the entire \textit{Color}, \textit{Fog}, and \textit{Low-Resolution} categories into the \textit{Blur} column (e.g., 555 Low-Res samples misclassified as Blur). 
    \item While \textbf{Qwen-VL-32B} shows slight improvement in separating \textit{Night} scenes, the confusion between \textit{Low-Resolution} and \textit{Blur} remains unresolved.
\end{itemize}
This phenomenon suggests that generalist models perceive ``degradation'' generically as a loss of high-frequency details (sharpness), lacking the fine-grained physical knowledge to distinguish between optical defocus (Blur), downsampling artifacts (Low-Res), and atmospheric scattering (Fog).

\paragraph{2. Mode Collapse.}
Figure \ref{fig:cm_llava} demonstrates a catastrophic failure case in LLaVA-1.5. The model exhibits extreme mode collapse, predicting the \textit{Color} category for the vast majority of inputs regardless of the actual degradation type (e.g., 413 Fog samples and 412 Night samples are misclassified as Color). This indicates that without specific instruction tuning or physical grounding, standard VLM visual encoders may fail to attend to the relevant low-level distortion features.

\paragraph{3. Taxonomy Misalignment.}
In the case of Gemini-2.5 (Figure \ref{fig:cm_gemini}), we observe a different behavior characterized by uncertainty. A significant portion of \textit{Fog} (468 samples) and \textit{Night} (442 samples) degradations are classified as ``Other''. This suggests that while the model avoids the ``Blur'' bias, it fails to align its internal semantic representations with the specific physical taxonomy used in restoration tasks, likely due to safety filters or a lack of domain-specific definition in its pre-training data.

\vspace{0.5cm}
\noindent\textbf{Conclusion.} These visualizations empirically validate that general-purpose MLLMs are ill-suited for precision-critical restoration guidance. They tend to hallucinate degradation types or collapse into generic categories. In contrast, our \textbf{DU-VLM}, constrained by the proposed hierarchical autoregressive objective, effectively disentangles these physical factors as evidenced by the high Top-1 accuracy reported in the main paper.

\section{Analysis of Parameter-Level Structural Hallucination}
\label{sec:appendix_param_analysis}

Beyond classification errors, mechanistic understanding requires the correct prediction of degradation-specific parameter keys (\textit{e.g.}, predicting \texttt{sigma} for Blur, but \texttt{scale} for Low-Resolution). In Figure \ref{fig:appendix_param}, we visualize the distribution of parameter keys predicted by generalist MLLMs for each ground-truth degradation type. This analysis exposes a critical ``One-Size-Fits-All'' fallacy in generalist models.

\paragraph{1. The Sigma Hallucination (GPT-4o \& Qwen Series).} 
As seen in Fig. \ref{fig:param_gpt}, \ref{fig:param_qwen8b}, and \ref{fig:param_qwen32b}, these models predominantly predict the \texttt{sigma} key (associated with Gaussian Blur) across physically unrelated categories. Specifically, for \textit{Low-Resolution} inputs (pixelation artifacts), GPT-4o predicts \texttt{sigma} (light blue bar) in over 70\% of cases. Physically, applying a deblurring kernel to a downsampled image often exacerbates artifacts (ringing). Qwen-8B (Fig. \ref{fig:param_qwen8b}) exhibits this bias most severely, predicting \texttt{sigma} (yellow bar) for nearly 100\% of \textit{Color} and \textit{Low-Resolution} samples.

\paragraph{2. Photometric Overfitting (LLaVA).} 
Consistent with its classification mode collapse, LLaVA (Fig. \ref{fig:param_llava}) assigns \texttt{gamma}-related keys (pink bar) to structural degradations like \textit{Low-Resolution} and \textit{Blur}. This implies the model attempts to ``fix'' geometric information loss using photometric color correction, which is theoretically impossible.

\paragraph{3. The Scale of Uncertainty (Qwen-235B).} 
Interestingly, the larger Qwen-235B (Fig. \ref{fig:param_qwen235b}) shows a high rate of ``None/Empty'' predictions (orange bar) for non-blur categories. While it avoids the ``Sigma'' hallucination to some extent, it fails to identify the correct physical keys (e.g., missing \texttt{A, t} for Fog), rendering it useless for controllable restoration.

These findings confirm that without the proposed hierarchical autoregressive modeling, MLLMs fail to disentangle the \textit{cause} (physics) from the \textit{effect} (quality drop).

\begin{figure*}[t]
    \centering
    % Row 1: GPT-4o & LLaVA
    \begin{subfigure}[b]{0.48\textwidth}
        \centering
        \includegraphics[width=\linewidth]{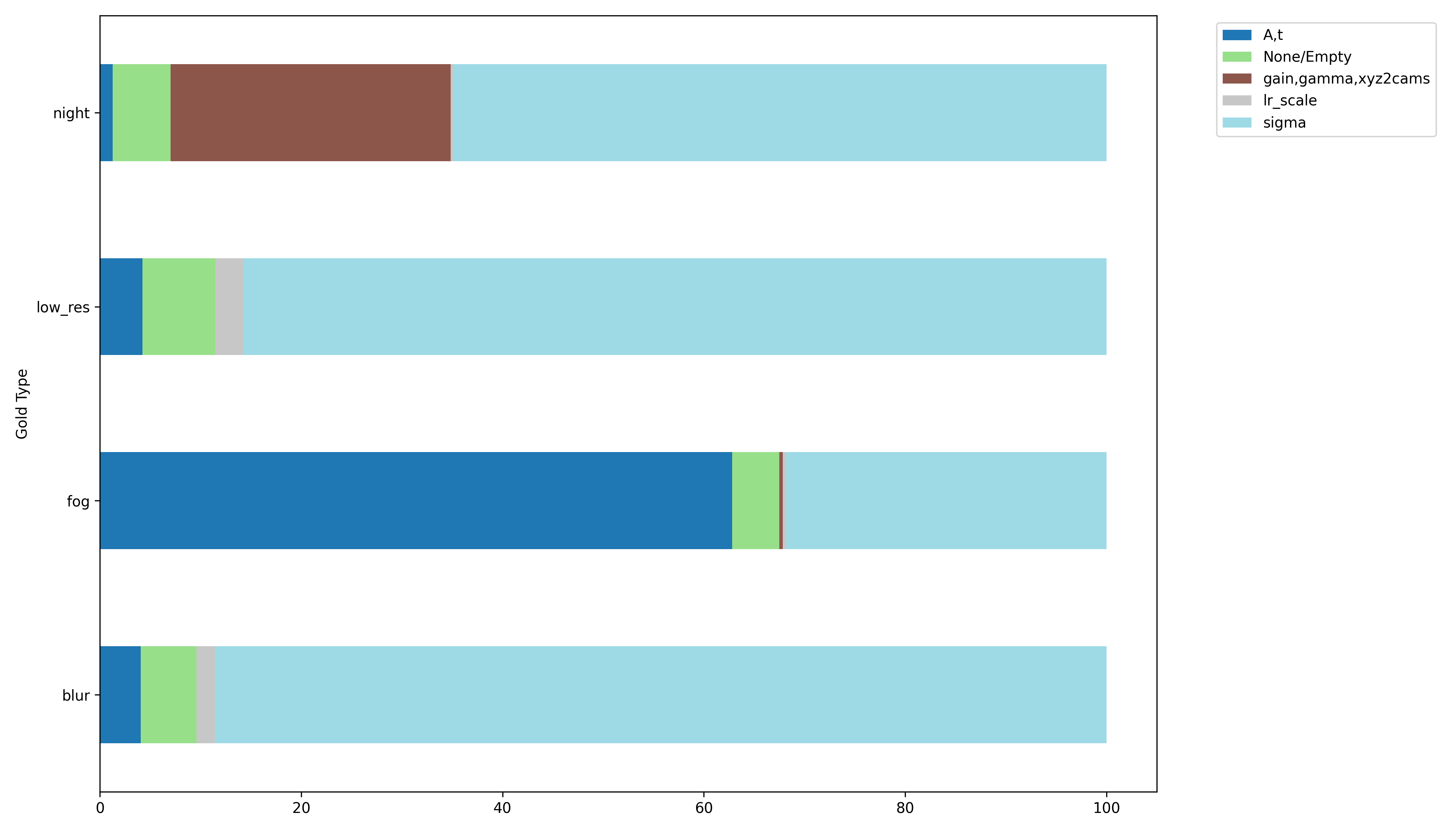}
        \caption{\textbf{GPT-4o}}
        \label{fig:param_gpt}
    \end{subfigure}
    \hfill
    \begin{subfigure}[b]{0.48\textwidth}
        \centering
        \includegraphics[width=\linewidth]{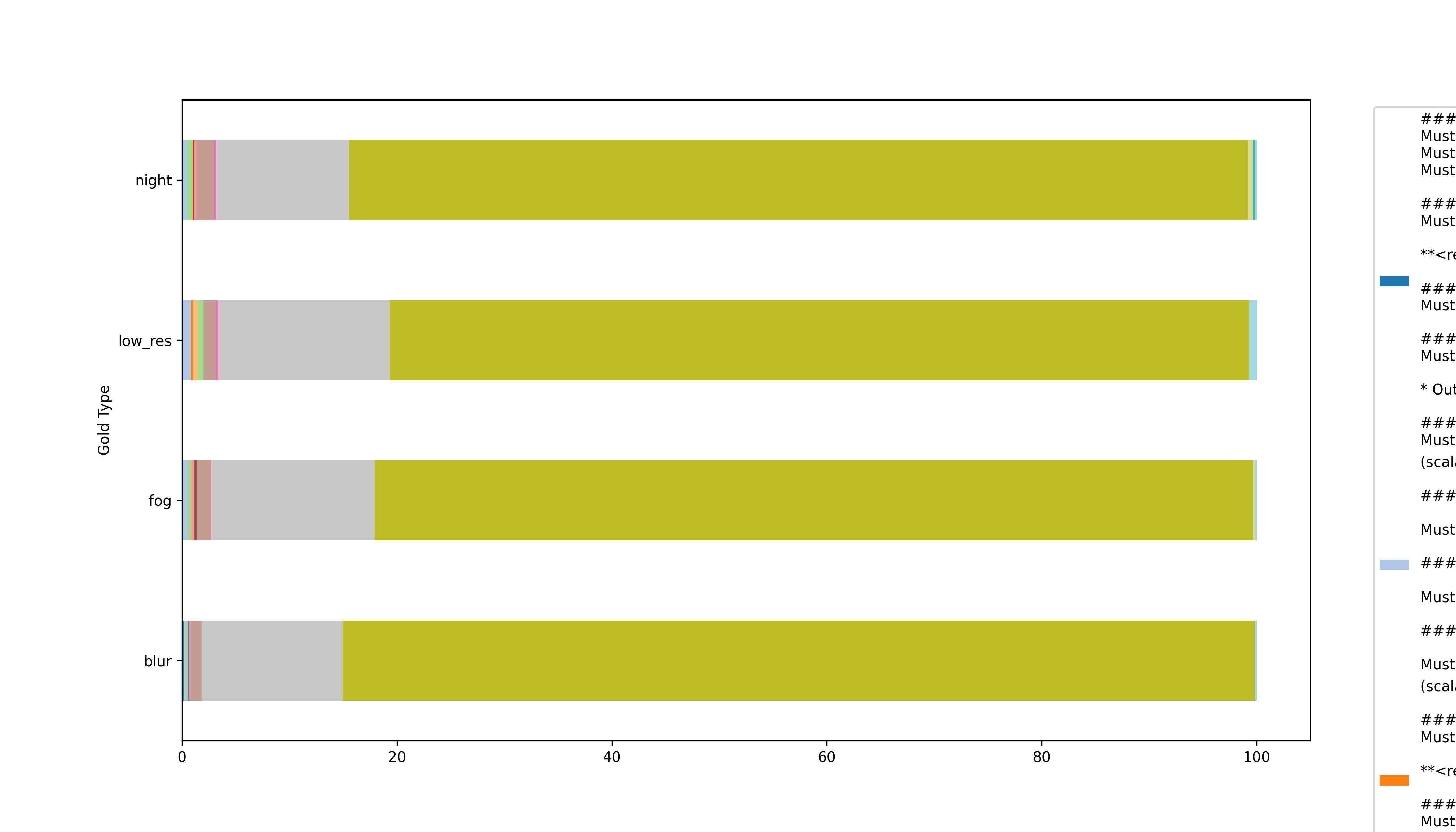}
        \caption{\textbf{LLaVA-1.5}}
        \label{fig:param_llava}
    \end{subfigure}
    
    \vspace{0.4cm}
    
    % Row 2: Qwen-8B & Qwen-32B
    \begin{subfigure}[b]{0.48\textwidth}
        \centering
        \includegraphics[width=\linewidth]{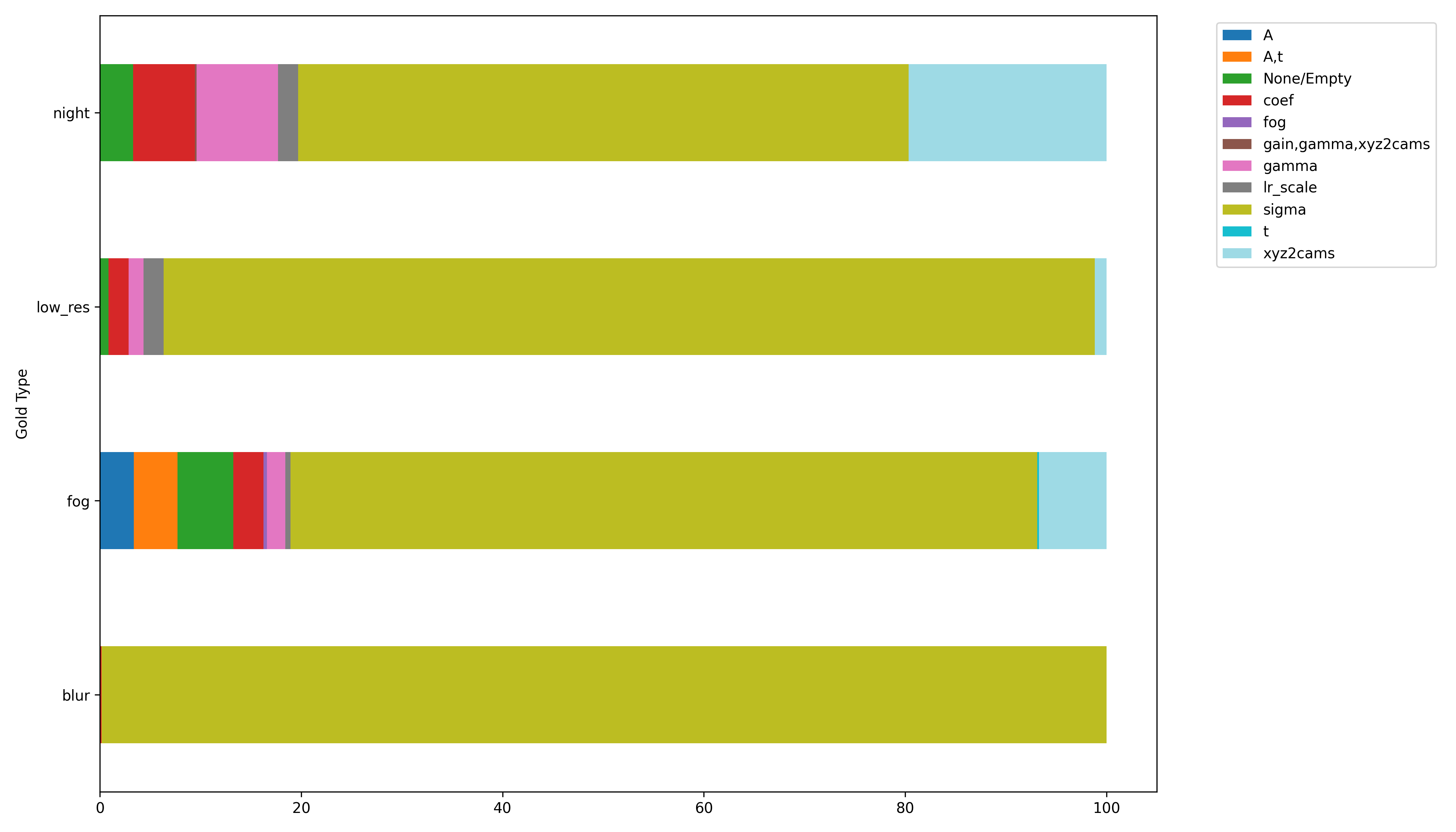}
        \caption{\textbf{Qwen-VL-8B}}
        \label{fig:param_qwen8b}
    \end{subfigure}
    \hfill
    \begin{subfigure}[b]{0.48\textwidth}
        \centering
        \includegraphics[width=\linewidth]{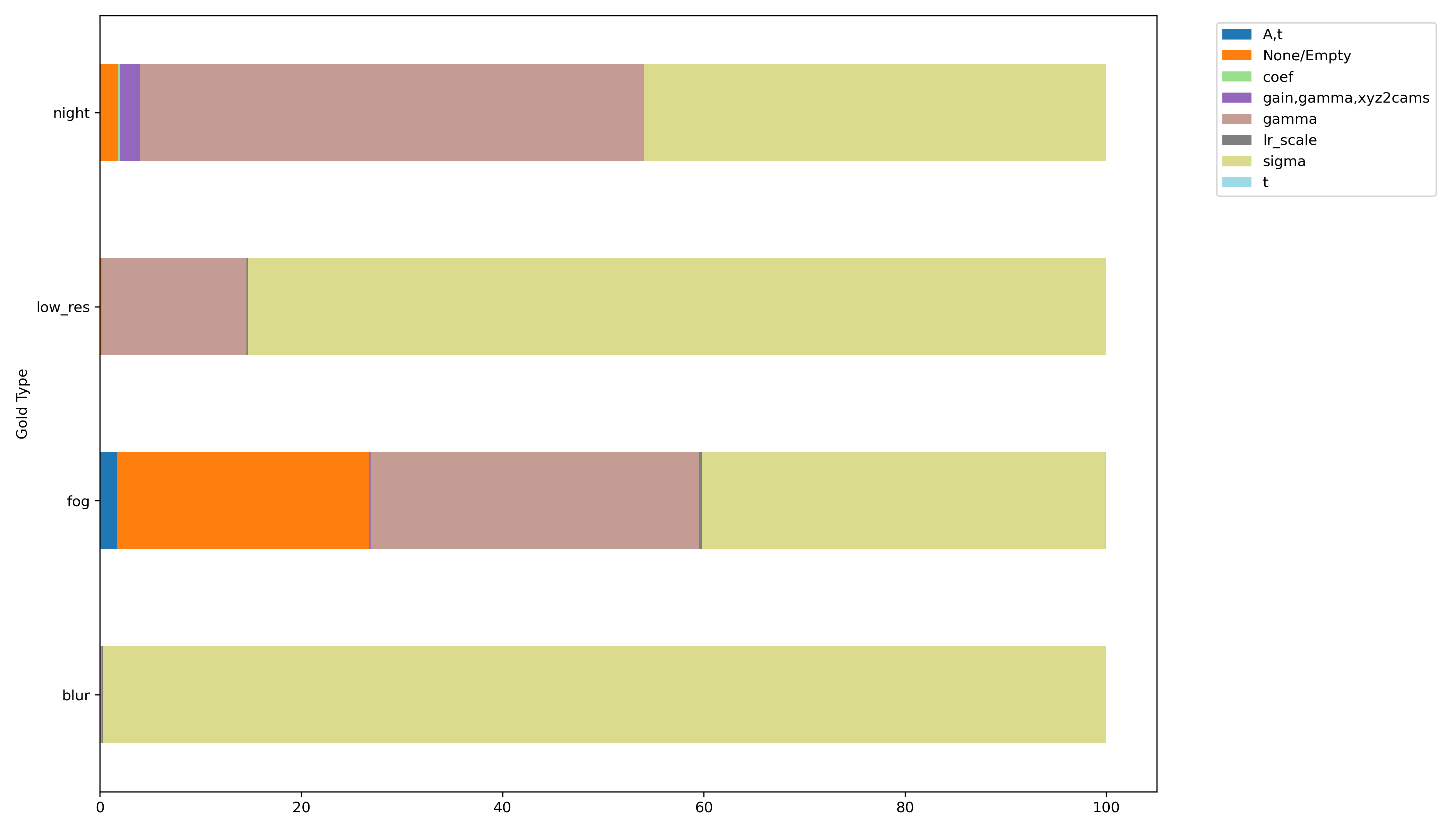}
        \caption{\textbf{Qwen-VL-32B}}
        \label{fig:param_qwen32b}
    \end{subfigure}

    \vspace{0.4cm}

    % Row 3: Qwen-235B (Centered)
    \begin{subfigure}[b]{0.48\textwidth}
        \centering
        \includegraphics[width=\linewidth]{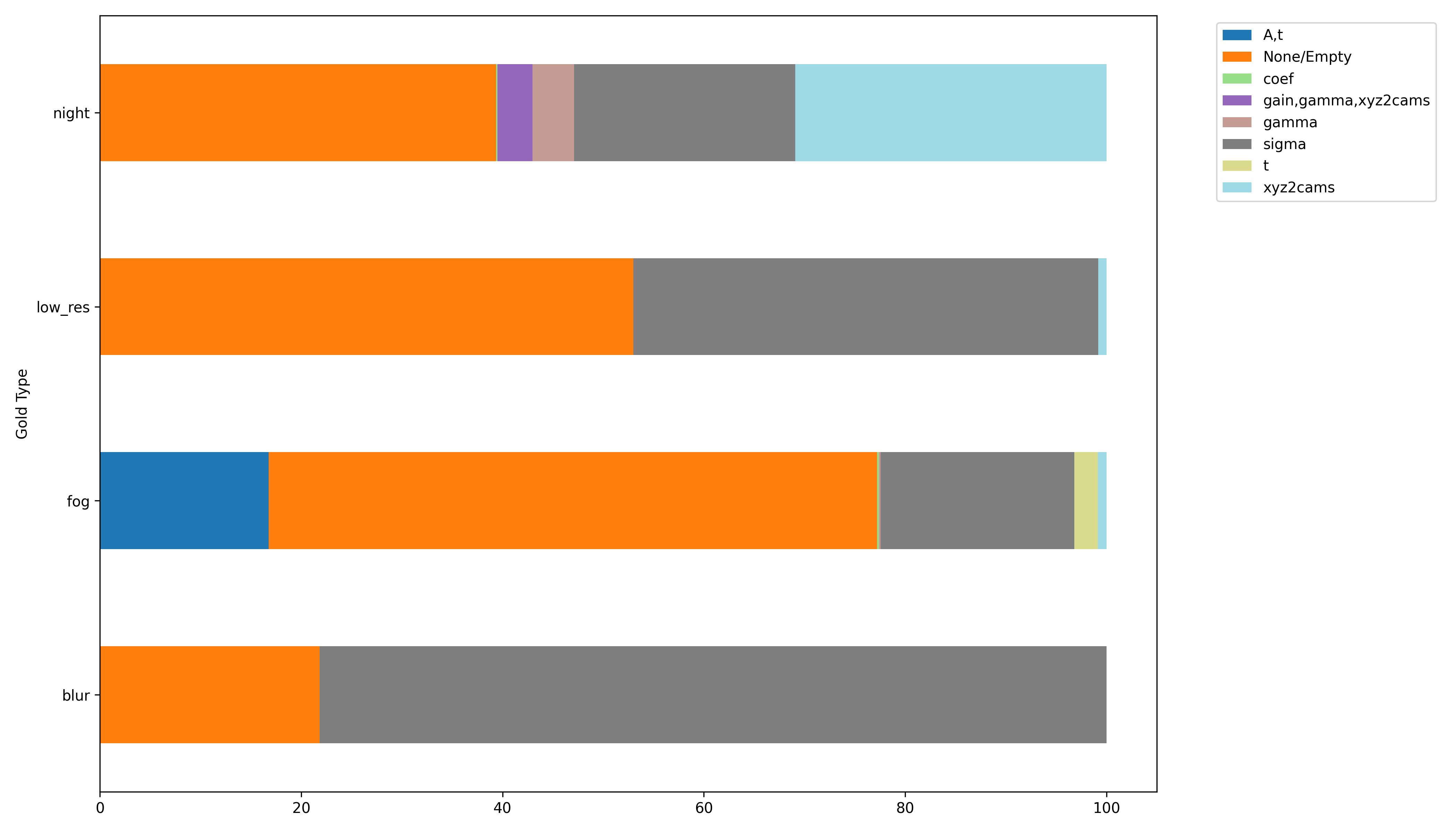}
        \caption{\textbf{Qwen-VL-235B}}
        \label{fig:param_qwen235b}
    \end{subfigure}
    \hfill
    \begin{subfigure}[b]{0.48\textwidth}
        \centering
        \includegraphics[width=\linewidth]{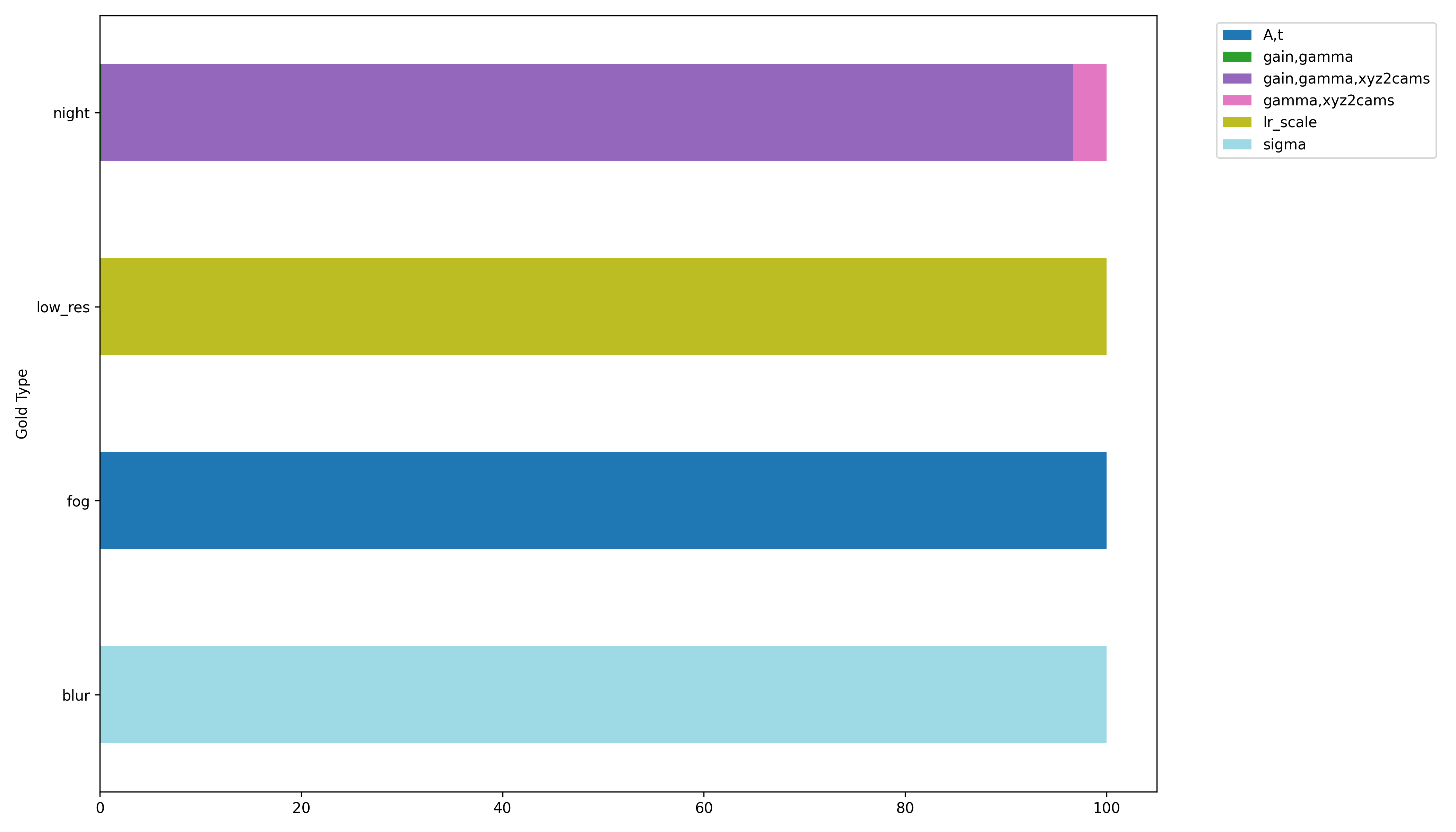}
        \caption{\textbf{DU-VLM}}
        \label{fig:param_du}
    \end{subfigure}

    \caption{\textbf{Parameter Key Distribution per Ground-Truth Type.} The x-axis represents the percentage of predicted keys. A physically grounded model should show distinct dominant colors for each row (e.g., \texttt{A, t} for Fog, \texttt{scale} for Low-Res). Instead, baselines show \textbf{Structural Hallucination}, repetitively predicting the same parameter (usually \texttt{sigma} or \texttt{gamma}) regardless of the actual physical degradation.}
    \label{fig:appendix_param}
\end{figure*}

\section{Implementation Details}
\paragraph{Implementation Details.}
We implement DU-VLM using the Qwen3-VL-8B architecture as the backbone. The model is trained on the DU-110k training set for 10 epochs using the AdamW optimizer with a learning rate of $1e-5$ and a cosine annealing schedule. For the restoration stage, we utilize the pre-trained DDNM \cite{wangzero}, injecting our predicted parameters $\hat{\mathcal{D}}$ into the reverse diffusion process. All experiments are conducted on $8 \times$ NVIDIA A100 GPUs.
\paragraph{Implementation Details of RL}
\label{sec:appendix_implementation_details}

The RL stage builds on the SFT-initialized policy to further improve the VLM's structured prediction capability, i.e., predicting the degradation type and a structured parameter tuple that conditions an external restoration prior. We train with GRPO, sampling multiple completions per input to form within-group samples for relative advantage estimation. The reward for each completion is produced by a deterministic evaluation pipeline: (i) parse the model output to obtain the predicted type $\hat{t}$ and parameters $\hat{\mathcal{D}}$; (ii) verify the validity of the parameter structure and value ranges; and (iii) invoke conditional restoration to produce the reconstructed image $\hat{I}_c$, which is then compared against the clean target $I_c$ using a pixel-space fidelity metric.

\paragraph{Gating strategy.}
We employ a strict gating mechanism to avoid assigning learning signal to invalid structures. If the predicted type mismatches the annotated label, or the predicted parameters fail type-specific format/shape/range checks, we immediately assign a fixed penalty $r=-1$ and skip restoration; only validated samples invoke DDNM to compute the reconstruction reward. This gating significantly reduces variance and prevents noisy rewards caused by querying the generative prior under malformed conditions.

\paragraph{Restoration prior and fidelity evaluation.}
After the predicted tuple passes validation, we use DDNM as a fixed conditional restoration prior $\mathcal{G}^{-1}$.
Given the degraded observation $I_d$ and the predicted degradation parameters $\hat{\mathcal{D}}$, the restored image is
\begin{equation}
\hat{I}_c = \mathcal{G}^{-1}(I_d;\hat{\mathcal{D}}).
\end{equation}
All images are resized to a fixed resolution (e.g., $256\times256$), and we apply the standard forward/inverse transforms
required by DDNM to map between image space and diffusion space.
We instantiate the fidelity metric $\mathcal{S}$ in Eq.~\eqref{eq:rl_reward_components} as the negative pixel-wise MSE:
\begin{equation}
r_{\mathrm{rec}}
=
\mathcal{S}(\hat{I}_c, I_c)
=
-\mathrm{MSE}(\hat{I}_c, I_c)
=
-\frac{1}{N}\sum_{i=1}^{N}\left(\hat{I}_c^{(i)}-I_c^{(i)}\right)^2,
\end{equation}
where $N$ denotes the number of elements (pixels $\times$ channels).
DDNM is always run in \texttt{eval} mode and serves only as a black-box evaluator; no gradients are backpropagated through the restoration process.

\paragraph{Online self-supervised variant.}
When clean targets are unavailable in the online phase, we reuse the same parsing and validation pipeline but replace the full-reference fidelity with a no-reference IQA score computed on $\hat{I}_c$. The model is updated iteratively and training stops when the running-average reward saturates. The same gating rules (type/parameter validity) are retained to ensure that IQA is only evaluated on semantically meaningful restorations, thereby reducing noise and misleading reward signals.

\begin{figure*}[htbp]
    \centering
    % 统一设置：所有子图宽度均为 0.24\textwidth
    
    % --- 第一行：4张图，Blur, Low Res, Haze (A, t) ---
    \begin{subfigure}[b]{0.24\textwidth}
        \centering
        \includegraphics[width=\textwidth]{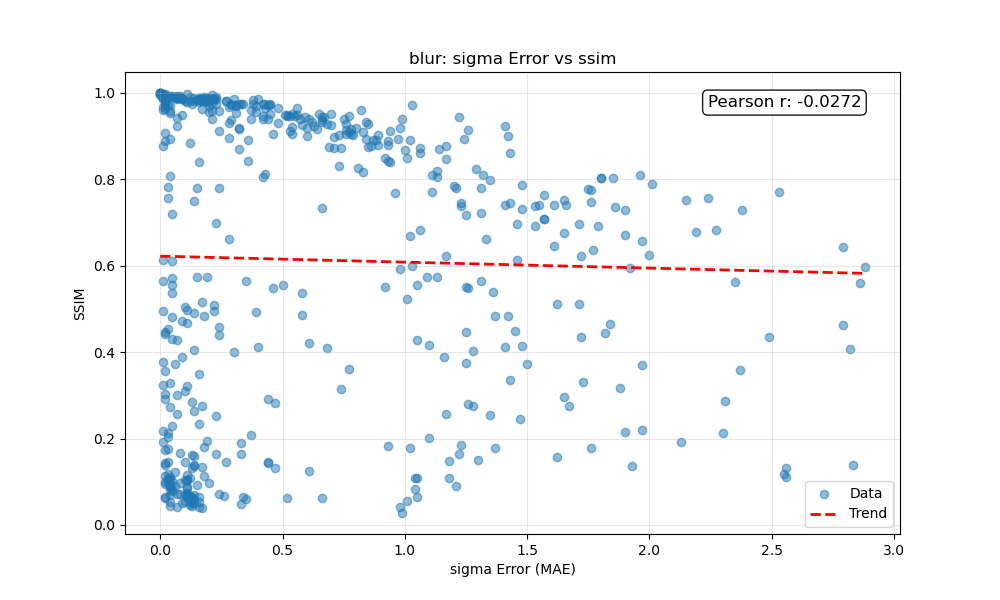}
        \caption{Blur: $\sigma$ Error}
        \label{fig:blur_ssim}
    \end{subfigure}
    \hfill
    \begin{subfigure}[b]{0.24\textwidth}
        \centering
        \includegraphics[width=\textwidth]{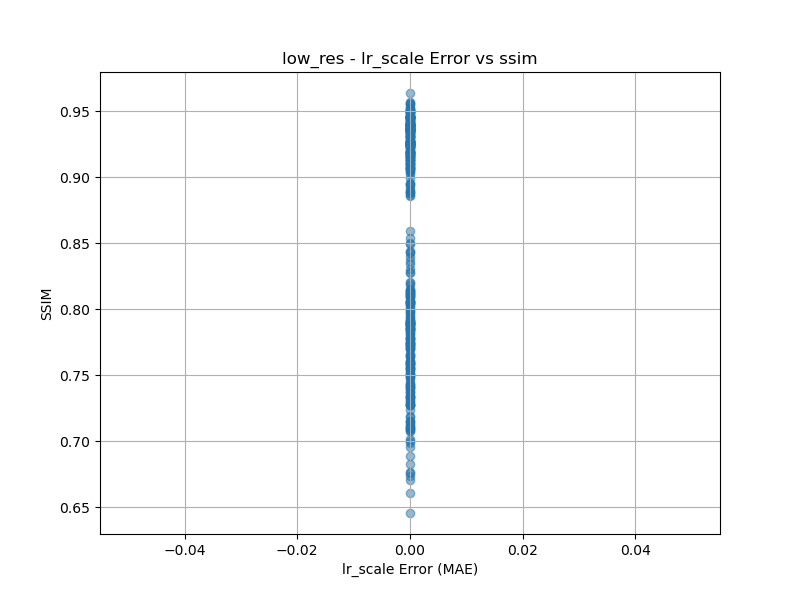}
        \caption{Low Res: Scale}
        \label{fig:low_res_ssim}
    \end{subfigure}
    \hfill
    \begin{subfigure}[b]{0.24\textwidth}
        \centering
        \includegraphics[width=\textwidth]{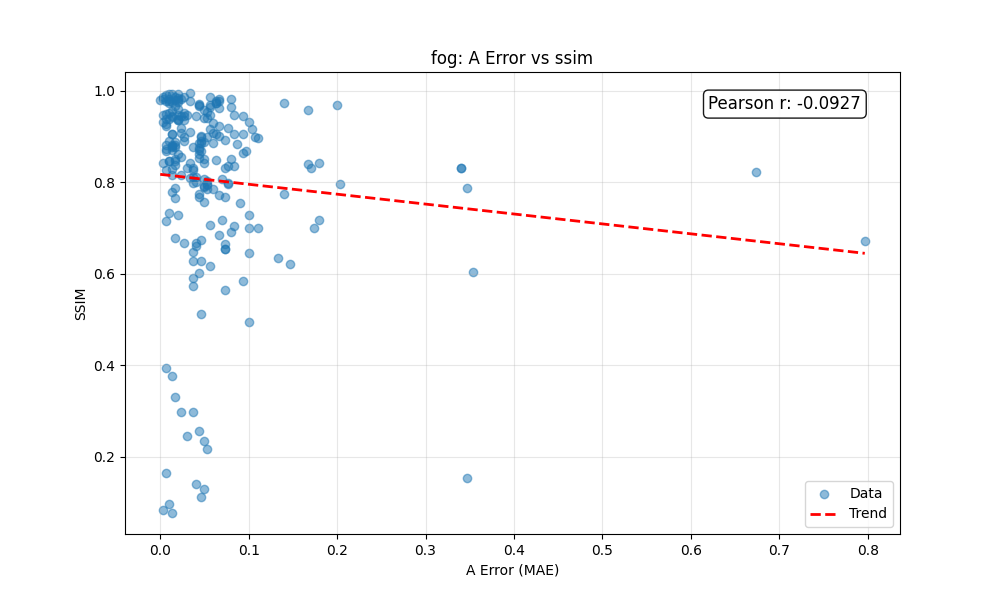}
        \caption{Haze: $A$ Error}
        \label{fig:haze_a_ssim}
    \end{subfigure}
    \hfill
    \begin{subfigure}[b]{0.24\textwidth}
        \centering
        \includegraphics[width=\textwidth]{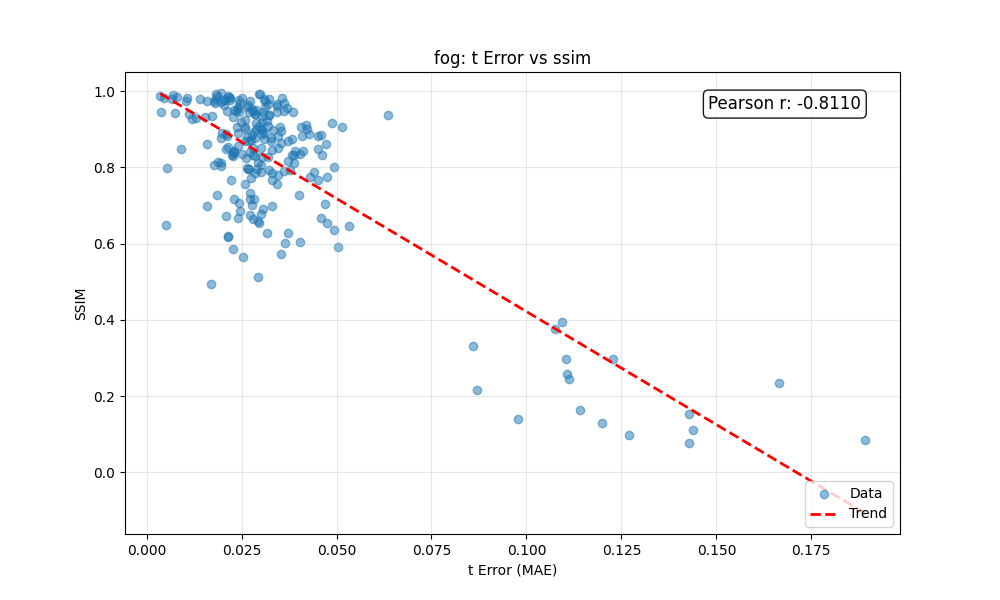}
        \caption{Haze: $t$ Error}
        \label{fig:haze_t_ssim}
    \end{subfigure}
    
    \vspace{1em} % 垂直间距
    
    % --- 第二行：3张图，Night (Gain, Gamma, XYZ) 居中 ---
    \begin{subfigure}[b]{0.24\textwidth}
        \centering
        \includegraphics[width=\textwidth]{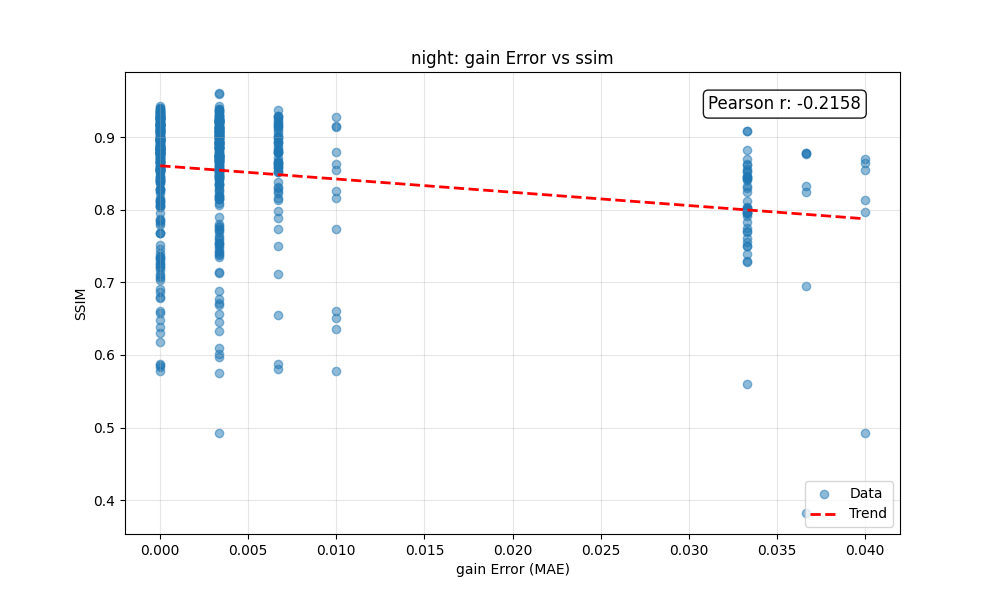}
        \caption{Night: Gain Error}
        \label{fig:night_gain_ssim}
    \end{subfigure}
    \hspace{0.5em}
    \begin{subfigure}[b]{0.24\textwidth}
        \centering
        \includegraphics[width=\textwidth]{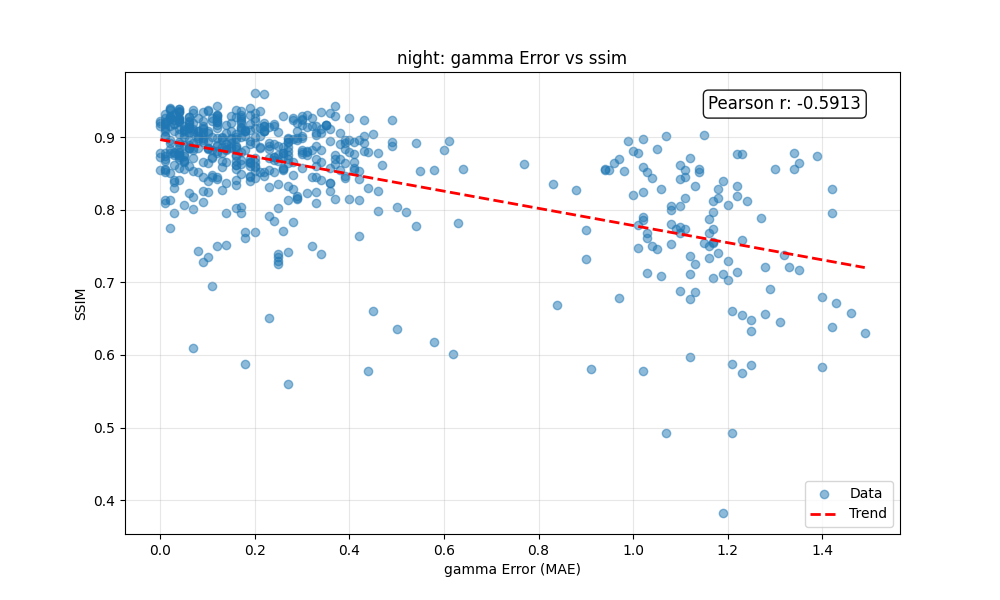}
        \caption{Night: $\gamma$ Error}
        \label{fig:night_gamma_ssim}
    \end{subfigure}
    \hspace{0.5em}
    \begin{subfigure}[b]{0.24\textwidth}
        \centering
        \includegraphics[width=\textwidth]{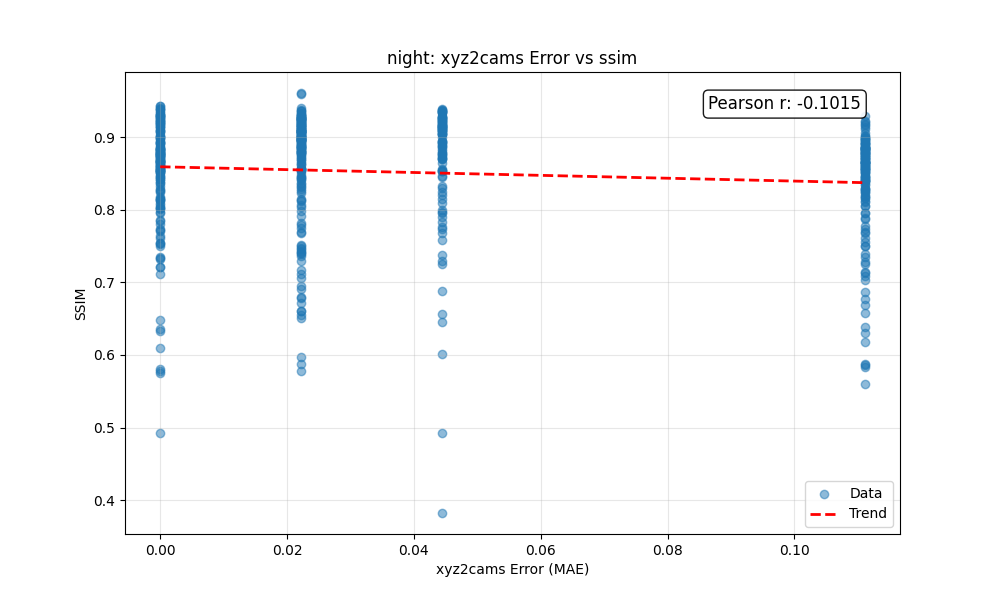}
        \caption{Night: Camera Intrinsic Error}
        \label{fig:night_xyz_ssim}
    \end{subfigure}
    
    \caption{SSIM analysis against parameter estimation errors. Top row: Blur, Low Resolution, and Haze parameters. Bottom row: Night scene parameters.}
    \label{fig:parameter_errors_ssim}
\end{figure*}

\section{Restoration Tolerance Analysis: Structural Similarity (SSIM)}
\label{subsec:ssim_tolerance}

Complementing the PSNR analysis in the main text, we explicitly quantify the impact of parameter estimation errors on Structural Similarity (SSIM) in \cref{fig:parameter_errors_ssim}. The results corroborate the trends observed in PSNR, revealing that structural fidelity is highly sensitive to specific degradation parameters.
Notably, deviations in the \textit{Haze Transmission} map ($t$) exhibit a strong negative correlation with SSIM ($\rho = -0.811$), indicating that inaccuracies in depth estimation can catastrophically disrupt the structural coherence of the restored image. Similarly, \textit{Night Gamma} correction errors ($\gamma$) significantly degrade structural quality ($\rho = -0.551$), likely due to the nonlinear distortion of local contrast.
In contrast, the restoration model demonstrates remarkable robustness to \textit{Blur} kernel estimation errors ($\sigma$), yielding a negligible correlation of $\rho = -0.027$. This suggests that for deblurring tasks, the generative prior of the diffusion model can effectively hallucinate plausible high-frequency structures even when the estimated kernel size deviates slightly from the ground truth.

\begin{figure*}[htbp]
    \centering
    % 统一设置：所有子图宽度均为 0.24\textwidth
    
    % --- 第一行：4张图，Blur, Low Res, Haze (A, t) ---
    \begin{subfigure}[b]{0.24\textwidth}
        \centering
        \includegraphics[width=\textwidth]{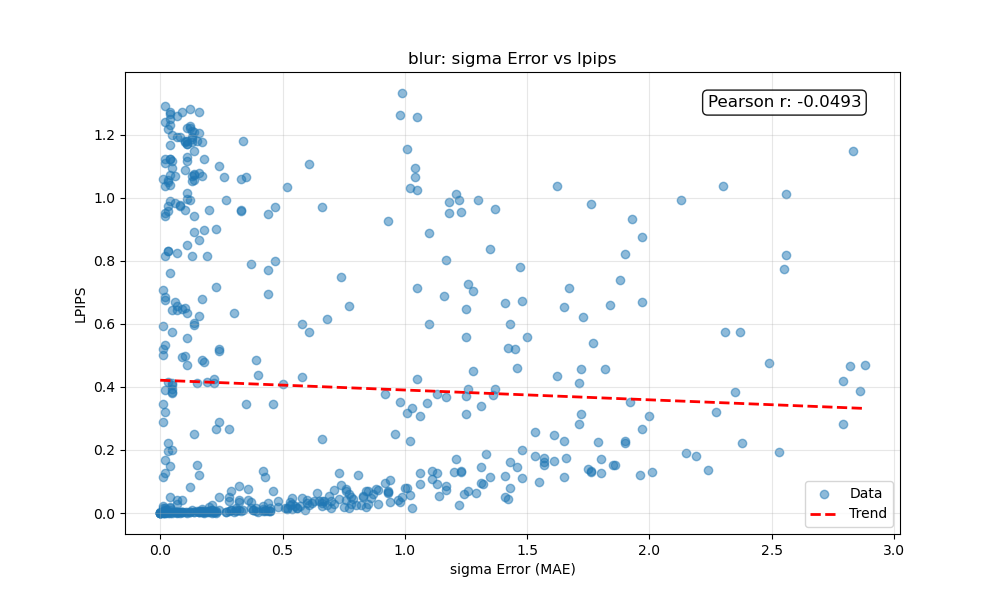}
        \caption{Blur: $\sigma$ Error}
        \label{fig:blur_lpips}
    \end{subfigure}
    \hfill
    \begin{subfigure}[b]{0.24\textwidth}
        \centering
        \includegraphics[width=\textwidth]{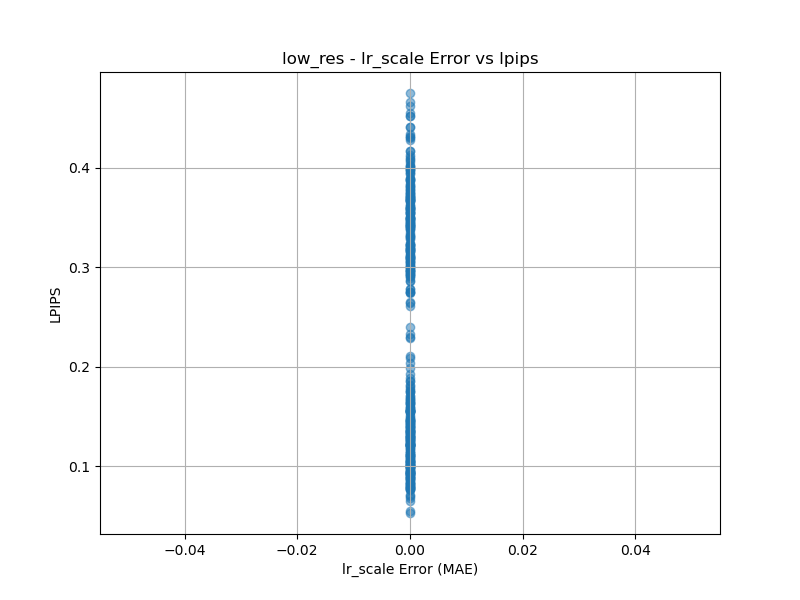}
        \caption{Low Res: Scale}
        \label{fig:low_res_lpips}
    \end{subfigure}
    \hfill
    \begin{subfigure}[b]{0.24\textwidth}
        \centering
        \includegraphics[width=\textwidth]{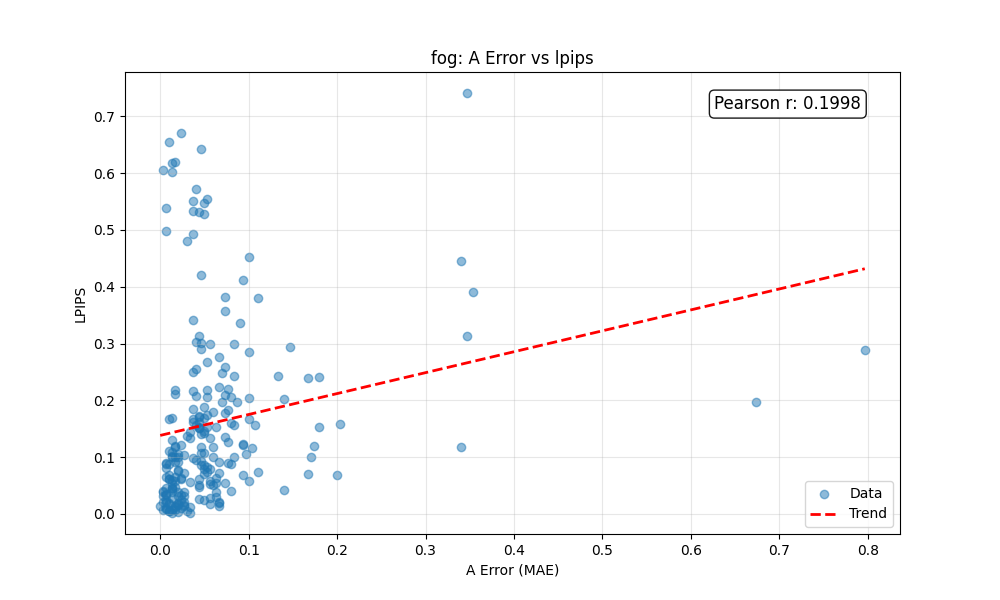}
        \caption{Haze: $A$ Error}
        \label{fig:haze_a_lpips}
    \end{subfigure}
    \hfill
    \begin{subfigure}[b]{0.24\textwidth}
        \centering
        \includegraphics[width=\textwidth]{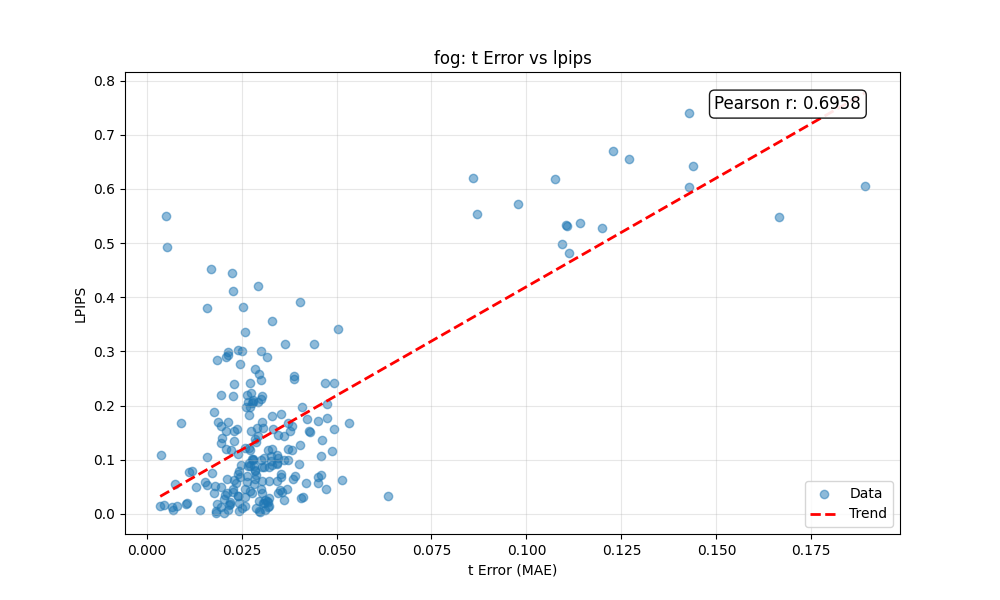}
        \caption{Haze: $t$ Error}
        \label{fig:haze_t_lpips}
    \end{subfigure}
    
    \vspace{1em} % 垂直间距
    
    % --- 第二行：3张图，Night (Gain, Gamma, XYZ) 居中 ---
    \begin{subfigure}[b]{0.24\textwidth}
        \centering
        \includegraphics[width=\textwidth]{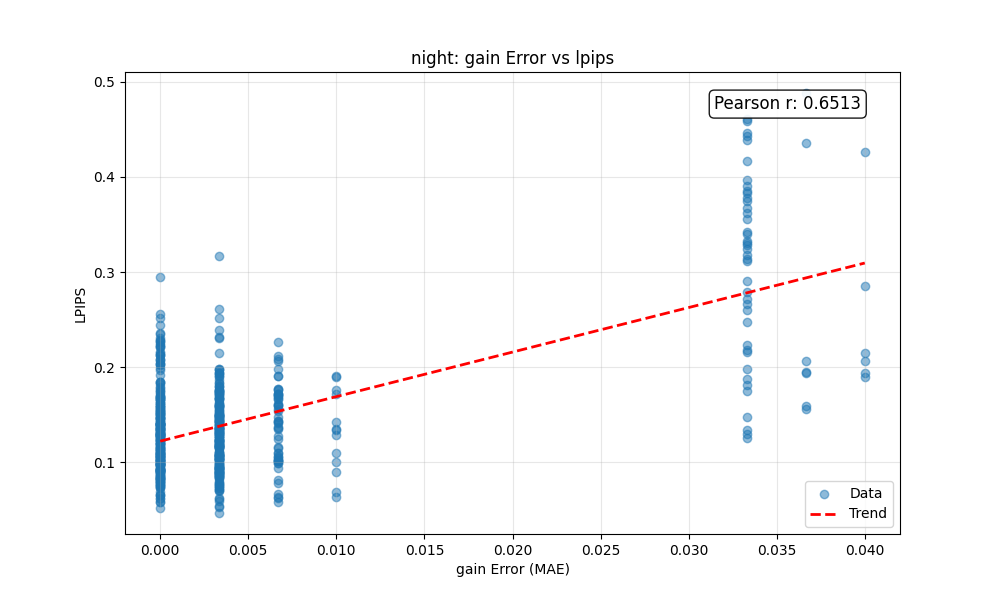}
        \caption{Night: Gain Error}
        \label{fig:night_gain_lpips}
    \end{subfigure}
    \hspace{0.5em}
    \begin{subfigure}[b]{0.24\textwidth}
        \centering
        \includegraphics[width=\textwidth]{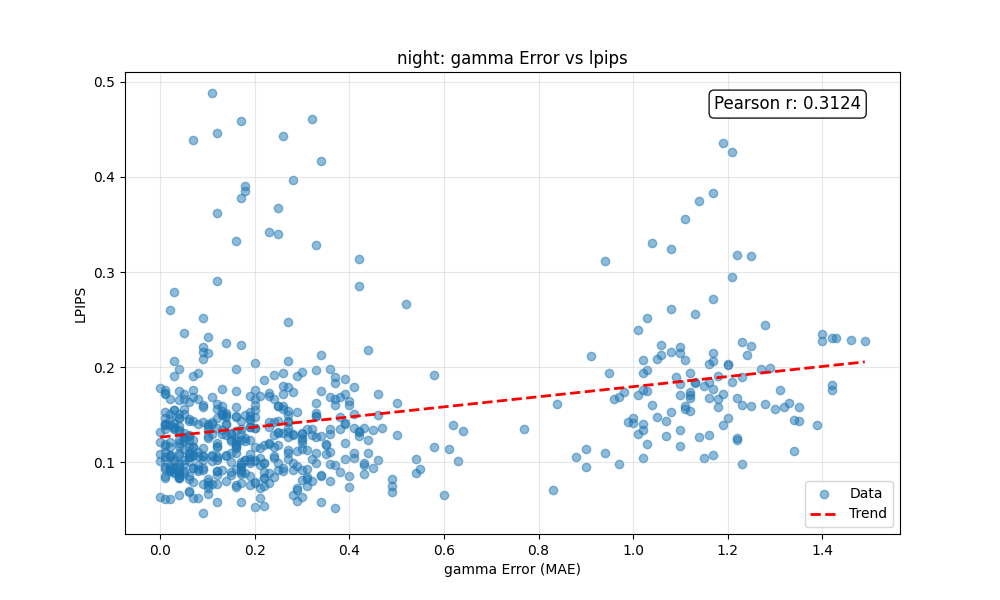}
        \caption{Night: $\gamma$ Error}
        \label{fig:night_gamma_lpips}
    \end{subfigure}
    \hspace{0.5em}
    \begin{subfigure}[b]{0.24\textwidth}
        \centering
        \includegraphics[width=\textwidth]{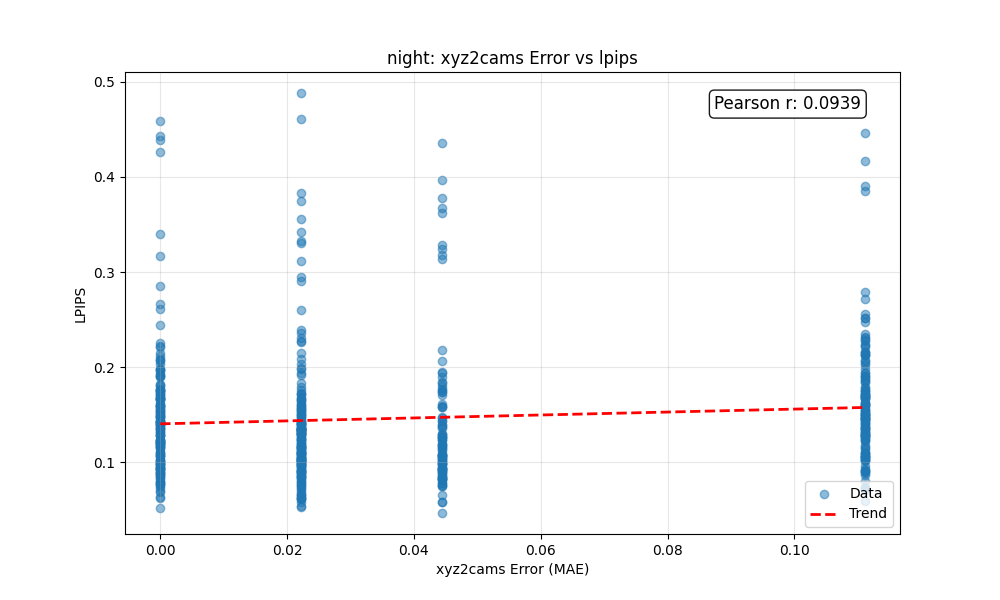}
        \caption{Night: Camera Intrinsic Error}
        \label{fig:night_xyz_lpips}
    \end{subfigure}
    
    \caption{LPIPS analysis against parameter estimation errors. Top row: Blur, Low Resolution, and Haze parameters. Bottom row: Night scene parameters.}
    \label{fig:parameter_errors_lpips}
\end{figure*}

\section{Restoration Tolerance Analysis: Perceptual Similarity (LPIPS)}
\label{subsec:lpips_tolerance}

We further assess the perceptual consistency of the restoration using Learned Perceptual Image Patch Similarity (LPIPS), where lower scores indicate better perceptual quality. Consequently, a positive correlation with estimation error implies a degradation in performance.
As illustrated in \cref{fig:parameter_errors_lpips}, the \textit{Haze Transmission} parameter ($t$) again dictates the upper bound of performance, showing the highest sensitivity ($\rho = 0.855$). This confirms that physical inaccuracies in atmospheric scattering models are perceptually jarring to the human eye.
Furthermore, \textit{Night Gain} errors show a more pronounced impact on LPIPS ($\rho = 0.651$) compared to their impact on SSIM ($\rho = 0.215$ in \cref{fig:parameter_errors_ssim}), suggesting that while gain errors may preserve edges (structure), they introduce perceptual artifacts such as amplified noise or unnatural color shifts.
Consistent with other metrics, \textit{Blur} parameters maintain a low correlation ($\rho \approx 0.05$), reinforcing the diffusion model's capability to act as a robust blind deblurring solver.